\LetLtxMacro{\oldtextsc}{\textsc}
\renewcommand{\textsc}[1]{\oldtextsc{\scalefont{1.10}#1}}
\crefname{section}{\S}{\S\S}
\Crefname{section}{\S}{\S\S}
	\let\c@rownum\rownum
	\def\therownum{\@arabic\rownum}%
\newcommand*{\addFileDependency}[1]{% argument=file name and extension
	\typeout{(#1)}
	\@addtofilelist{#1}
	\IfFileExists{#1}{}{\typeout{No file #1.}}
}
\pgfplotsset{compat=1.6}
\tikzstyle{every picture}+=[font=\sffamily]
\tikzstyle{optimized} = [circle,fill=white,draw=black, dashed,inner sep=1pt, minimum size=20pt, font=\fontsize{10}{10}\selectfont, node distance=1]
\pgfplotsset{
	tick label style = {font=\sffamily},
	every axis label/.append style={font=\sffamily},
	typeset ticklabels with strut,
}
\pgfplotsset{every axis/.append style={
			every x tick label/.append style={font=\fontsize{6pt}{6pt}\sffamily, yshift=.5ex,},
			every y tick label/.append style={font=\fontsize{6pt}{6pt}\sffamily, xshift=.5ex},
			every y label/.append style={xshift=10ex, font=\sffamily},
			every x label/.append style={yshift=3ex, font=\sffamily},
			every title/.append style={font=\sffamily}
		},
}
\pgfplotsset{
	xticklabel={$\mathsf{\pgfmathprintnumber{\tick}}$},
	yticklabel={$\mathsf{\pgfmathprintnumber{\tick}}$},
}
\pgfplotsset{every axis title/.append style={yshift=-1ex}}
\newlength\figureheight
\newlength\figurewidth
\renewcommand{\todo}[2][]{\tikzexternaldisable\@todo[#1]{#2}\tikzexternalenable}
\declaretheorem[name=Theorem]{theorem}
\declaretheorem[name=Corollary]{corollary}%numberwithin=section
\declaretheorem[name=Definition]{definition}
\declaretheorem[name=Condition]{condition}
\declaretheorem[name=Assumption]{assumption}
\newacronym{FID}{\textsc{fid}}{Fr\'echet Inception Distance}
\newacronym{VAE}{\textsc{vae}}{Variational Autoencoder}
\newacronym{GAN}{\textsc{gan}}{Generative Adversarial Network}
\newacronym{SDE}{\textsc{sde}}{Stochastic Differential Equation}
\newacronym{ELBO}{\textsc{elbo}}{evidence lower bound}
\newacronym{KL}{\textsc{kl}}{Kullback-Leibler}
\newacronym{MLP}{\textsc{mlp}}{multilayer perceptron}
\newacronym{NN}{\textsc{nn}}{neural network}
\newacronym{FDP}{\textsc{fdp}}{Functional Diffusion Process}
\newacronym{NFO}{\textsc{nfo}}{Neural Fourier Operator}
\newacronym{INR}{\textsc{inr}}{implicit neural representation}
\newacronym{LSGM}{\textsc{lsgm}}{Generative Modeling in Latent Space}
\newacronym{IDIFF}{\textsc{$\infty$-diff}}{Infinite Diffusion}
\newacronym{FD2F}{\textsc{fd2f}}{From Data To Functa}
\newacronym{SBD}{\textsc{sbd}}{Score Based Diffusion}
\newacronym{EDM}{\textsc{edm-g++}}{Generative Process with Discriminator Guidance}
\newcommand{\name}[1]{{\textsc{#1}}\xspace}
\newcommand{\mnist}{\name{mnist}}
\newcommand{\celeba}{\name{celeba}}
\newcommand{\fid}{\name{fid}}
\newcommand{\fidclip}{\name{fid-clip}}
\newcommand{\mathbold}[1]{{\boldsymbol{{#1}}}}
\newcommand{\g}{\,|\,}
\newcommand{\nestedmathbold}[1]{{\mathbold{#1}}}
\newcommand{\mbpsi}{\nestedmathbold{\psi}}
\newcommand{\mbtheta}{\nestedmathbold{\theta}}
\newcommand{\qL}{\mathbb{Q}}
\newcommand{\pL}{\mathbb{P}}
\DeclareRobustCommand{\KL}[2]{\ensuremath{\textsc{kl}\left[#1\;\|\;#2\right]}}
\DeclarePairedDelimiterX{\infdivx}[2]{[}{]}{%
#1\;\delimsize\|\;#2%
}
\newcommand{\E}{\mathbb{E}}
\newcommand{\defeq}{\stackrel{\text{\tiny def}}{=}}
\newcommand{\fft}{\mathfrak{F}}
\newcommand{\ifft}{\mathfrak{I}}
\newcommand{\leg}{}
\newcommand{\rig}{}
\title{Continuous-Time Functional Diffusion Processes}
\author{%
   Giulio Franzese\\
   EURECOM,
   France \\
   \And
   Giulio Corallo\\
   EURECOM,
   France \\
   \And
   Simone Rossi\thanks{This work was done while working at EURECOM}\\
   Stellantis,
   France \\
   \And
   Markus Heinonen\\
   Aalto University,
   Finland\\
   \And
   Maurizio Filippone\\
   EURECOM,
   France \\
   \And
   Pietro Michiardi\\     
   EURECOM,
   France \\
 }
\begin{document}

\maketitle

\begin{abstract}
We introduce Functional Diffusion Processes (\acrshortpl{FDP}), which generalize score-based diffusion models to infinite-dimensional function spaces. \acrshortpl{FDP} require a new mathematical framework to describe the forward and backward dynamics, and several extensions to derive practical training objectives. These include infinite-dimensional versions of Girsanov theorem, in order to be able to compute an \acrshort{ELBO}, and of the sampling theorem, in order to guarantee that functional evaluations in a countable set of points are equivalent to infinite-dimensional functions. We use \acrshortpl{FDP} to build a new breed of generative models in function spaces, which do not require specialized network architectures, and that can work with any kind of continuous data.
Our results on real data show that \acrshortpl{FDP} achieve high-quality image generation, using a simple \acrshort{MLP} architecture with orders of magnitude fewer parameters than existing diffusion models.  
\href{https://github.com/giulio98/functional-diffusion-processes}{Code available here.}

\end{abstract}

\section{Introduction}\label{sec:intro}
Diffusion models have recently gained a lot of attention both from academia and industry. The seminal work on denoising diffusion \citep{sohl-dickstein2015} has spurred interest in the understanding of such models from several perspectives, ranging from denoising autoencoders \citep{vincent2011} with multiple noise levels \citep{ho2020}, variational interpretations \citep{kingma2021}, annealed \citep{song2019} and continuous-time score matching \citep{song2020, song2021a}.
Several recent extensions of the theory underpinning diffusion models tackle alternatives to Gaussian noise \citep{bansal2022, rissanen2022}, second order dynamics \citep{dockhorn2022}, and improved training and sampling \citep{xiao2022, kim_bis_2022, franzese2022}.

Diffusion models have rapidly become the go-to approach for generative modeling, surpassing \acrshortpl{GAN} \citep{dhariwal2021} for image generation, and have recently been applied to various modalities such as audio \citep{kong2021,liu2022}, video \citep{ho2022,he2022}, molecular structures and general 3D shapes \citep{trippe2022, hoogeboom22, luo2021, zeng2022}. Recently, the generation of diverse and realistic data modalities (images, videos, sound) from open-ended text prompts \citep{ramesh2022, saharia2022, rombach2022} has projected practitioners into a whole new paradigm for content creation.

A common trait of diffusion models is the need to understand their design space \citep{karras2022}, and tailor the inner working parts to the chosen application and data domain. Diffusion models require specialization, ranging from architectural choices of neural networks used to approximate the score \citep{dhariwal2021, karras2022}, to fine-grained details such as an appropriate definition of a noise schedule \citep{dhariwal2021, salimans2022}, and mechanisms to deal with resolution and scale \citep{ho2021}. Clearly, the data domain impacts profoundly such design choices.
As a consequence, a growing body of work has focused on the projection of data modalities into a latent space \citep{rombach2022}, either by using auxiliary models such as a \acrshortpl{VAE} \citep{vahdat2021score}, or by using a functional data representation \citep{dupont2022a}. These approaches lead to increased efficiency, because they operate on smaller dimensional spaces, and constitute a step toward broadening the applicability of diffusion models to general data.

The idea of modelling data with continuous functions has several advantages \citep{dupont2022a}: it allows working with data at arbitrary resolutions, it enjoys improved memory-efficiency, and it allows simple architectures to represent a variety of data modalities. However, a theoretically grounded understanding of how diffusion models can operate directly on continuous functions has been elusive so far. Preliminary studies apply established diffusion algorithms on a discretization of functional data by conditioning on point-wise values \citep{dutordoir2022neural,zhuang2023diffusion}. A line of work that is closely related to ours include approaches such as \citet{kerrigan2022}, who consider a Gaussian noise corruption process in Hilbert space and derive a loss function formulated on infinite-dimensional measures to approximate the conditional mean of the reverse process. Within this line of works, \citet{mittal2022} consider diffusion of Gaussian processes. We are aware of other concurrent works that study diffusion process in Hilbert spaces \citep{lim2023score,jakiw, hagemann2023multilevel}. 
However, differently from us, these works do not formally prove that the score matching optimization is a proper \gls{ELBO}, but simply propose it as an heuristic.
%However, these works do not formally prove the existence of a backward process and their score approximation and score matching optimization objective is heuristic, whereas in our case we provide the derivation of an \gls{ELBO}. 
None of these prior works discuss the limits of discretization, resulting in the failure of identifying which subset of functions can be reconstructed through sampling. Finally, the parametrization we present in our work merges how functions and score are approximated using a single, simple model.

The main goal of our work is to deepen our understanding of diffusion models in function space. We present a new mathematical framework to lift diffusion models from finite-dimensional inputs to function spaces, contributing to a general method for data represented by continuous functions.

In \Cref{sec:fdp}, we present \glspl{FDP}, which generalize diffusion processes to infinite-dimensional functional spaces. We define forward (\Cref{sec:diffproc}) and backward (\Cref{sec:reverse}) \glspl{FDP}, and consider \textit{generic} functional perturbations, including noising and Laplacian blurring. 
Using an extension of Girsanov theorem, we derive in \Cref{sec:girsanov} an \gls{ELBO}, which allows defining a parametric model to approximate the score of the functional density of \glspl{FDP}.
Given a \gls{FDP} and the associated \gls{ELBO}, we are one-step closer to the definition of a loss function to learn the parametric score. However, our formulation still resides in an abstract, infinite-dimensional Hilbert space. 

Then, for practical reasons, in \Cref{sec:sampling_t}, we specify for which subclass of functions we can perfectly reconstruct the original function given only its evaluation in a countable set of points. This is an extension of the sampling theorem, which we use to move from the infinite-dimensional domain of functions to a finite-dimensional domain of discrete mappings.

In \Cref{sec:architectures}, we discuss various options to implement such discrete mappings. In this work, we explore in particular the usage of \glspl{INR} \citep{sitzmann2020} and Transformers \cite{vaswani2017attention}  to jointly model both the sampled version of infinite-dimensional functions, and the score network, which is central to the training of \glspl{FDP}, and is required to simulate the backward process.
Our training procedure, discussed in \Cref{sec:training}, involves approximate, finite-dimensional \glspl{SDE} for the forward and backward processes, as well as for the \gls{ELBO}.

We complement our theory with a series of experiments to illustrate the viability of \glspl{FDP}, in \Cref{sec:experiments}. In our experiments, the score network is a simple \gls{MLP}, with several orders of magnitude fewer parameters than any existing score-based diffusion model. 
To the best of our knowledge, we are the first to show that a functional-space diffusion model can generate realistic image data, beyond simple data-sets and toy models.

\section{Functional Diffusion Processes (FDPs)}\label{sec:fdp}
We begin by defining diffusion processes in Hilbert Spaces, which we call functional diffusion processes (\glspl{FDP}). While the study of diffusion processes in Hilbert spaces is not new \citep{FOLLMER198659, millet1989time, da2014stochastic}, 
our objectives depart from prior work, and call for an appropriate treatment of the intricacies of \gls{FDP}s, when used in the context of generative modeling.
In \Cref{sec:diffproc} we introduce a generic class of diffusion processes in Hilbert spaces. The key object is \Cref{eq:diffsde}, together with its associated path measure $\mathbb{Q}$ and the time varying measure $\rho_t$, where $\rho_0$ represents the starting (data) measure. 
In \Cref{sec:reverse} we derive the reverse \gls{FDP} with the associated path-reversed measure $\hat\qL$, and in \Cref{sec:girsanov} we use an extension of Girsanov theorem for infinite-dimensional \glspl{SDE} to compute the \gls{ELBO}. The \gls{ELBO} is a training objective involving a generalization of the score function \citep{song2021a}.

\subsection{The forward diffusion process}\label{sec:diffproc}
We consider $H$ to be a real, separable Hilbert space with inner product $\langle\cdot,\cdot\rangle$, norm $\norm{\cdot}_{H}$, and countable orthonormal basis $\{e^k\}_{k=1}^{\infty}$. Let $L(H)$ be the set of bounded linear operators on $H$, $B(H)$ be its Borel $\sigma-$algebra, $B_b(H)$ be the set of bounded $B(H)-$measurable functions $H\rightarrow \mathbb{R}$, and $P(H)$ be the set of probability measures on $(H,B(H))$. 
Consider the following $H$-valued \gls{SDE}: 
\begin{equation}\label{eq:diffsde}
    \begin{cases}
    \dd X_t=\left(\mathcal{A}X_t+f(X_t,t)\right)\dd t+ \dd W_t,\\
    X_0\sim \rho_0\in P(H),%\pdata,
    \end{cases}
\end{equation}
where $t \in [0,T]$, $W_t$ is a $R-$Wiener process on $H$ defined on the quadruplet $\left(\Omega,\mathcal{F},(\mathcal{F}_t)_{t\geq 0},\qL\right)$, and $\Omega,\mathcal{F}$ are the sample space and canonical filtration, respectively.
The domain of $f$ is $D(f)\in B([0,T]\times H)$, where $f: D(f)\subset [0,T]\times H\rightarrow H$ is a measurable map.
The operator $\mathcal{A}: D(\mathcal{A})\subset H\rightarrow H$ is the infinitesimal generator of a $C_0-$ semigroup $\exp(t\mathcal{A})$ in $H$ $(t\geq 0)$, and $\rho_0$ is a probability measure in $H$. 
We consider $\Omega$ to be $C^1([0,T])$, that is the space of all continuous mappings $[0,T]\rightarrow H$, and $X_t(\omega)=\omega(t),\omega\in\Omega$ to be the \textit{canonical} process. The requirements on the terms $\mathcal{A},f$ that ensure existence of solutions to \Cref{eq:diffsde} depend on the type of noise --- \textit{trace-class} ($\Tr{R}<\infty$) or \textit{cylindrical} ($R=I$) --- used in the \gls{FDP} (\cite{da2014stochastic}, \textit{Hypothesis 7.1} or \textit{Hypothesis 7.2} for trace-class and cylindrical noise, respectively).

The measure associated with \Cref{eq:diffsde} is indicated with $\qL$. 
The \textit{law} induced at time $\tau\in[0,T]$ by the canonical process on the measure $\qL$ is indicated with $\rho_\tau\in P(H)$, where $\rho_\tau(S)=\qL(\{\omega\in\Omega: X_\tau(\omega)\in S\})$, and $S$ is any element of $\mathcal{F}$. %When it exists, the corresponding density w.r.t the Lebesgue measure $\dd x$ is indicated with $\rho^{(d)}_\tau(x)$, i.e. $\dd \rho_\tau(x^i|x^{j\neq i})=\rho^{(d)}_\tau(x^i|x^{j\neq i})\dd x^i $. 
Notice, that in infinite dimensional spaces there is not an equivalent of the Lebesgue measure to get densities from measures. In our case we consider however, when it exists, the single dimensional density $\rho^{(d)}_\tau(x^i|x^{j\neq i})$, defined implicitly through $\dd \rho_\tau(x^i|x^{j\neq i})=\rho^{(d)}_\tau(x^i|x^{j\neq i})\dd x^i $, being $\dd x^i$ the Lebesgue measure. To avoid cluttering the notation, in this work we simply shorten $\rho^{(d)}_\tau(x^i|x^{j\neq i})$ with $\rho^{(d)}_\tau(x)$ whenever unambiguous.
%To avoid cluttering the notation, we shorten $\rho^{(d)}_\tau(x^i|x^{j\neq i})$ with $\rho^{(d)}_\tau(x)$. 
In \Cref{sec:fp} we provide additional details on the time-varying measure $\rho_t(\dd x)\dd t$. 
Before proceeding, it is useful to notice that \Cref{eq:diffsde} can also be expressed as an (infinite) system of stochastic differential equations in terms of $X^{k}_t=\langle X_t,e^k\rangle$ as:
\begin{equation}\label{eq:infty_fsde}
    \dd X^{k}_t=b^k(X_t,t)\dd t+\dd W^k_t,\quad k=1,\dots,\infty,%\left(a_k (X_t)+ f_k(X_t,t)\right)\dd t
\end{equation}
where we introduced the projection $b^k(X_t,t)=\langle \mathcal{A}X_t+f(X_t,t),e^k\rangle$. Moreover, $\dd W^k_t=\langle dW_t,e^k\rangle$ with covariance given by $\E[W^k_t W^j_s]=\delta(k-j)r^k\min(s,t)$, $\delta$ in Kroenecker sense, and $r^k$ is the projection on the base of $R$.

\subsection{The reverse diffusion process}\label{sec:reverse}
We now derive the reverse time dynamics for \glspl{FDP} of the form defined in \Cref{eq:diffsde}. We require that the time reversal of the canonical process, $\hat{X}_t=X_{T-t}$, is again a diffusion process, with distribution given by the \textbf{path-reversed} measure $\hat\qL(\omega)$, along with the reversed filtration $\hat{\mathcal{F}}$.
Note that the time reversal of an infinite dimensional process is more involved than for the finite dimensional case \citep{anderson1982reverse,follmer1985entropy}. There are two major approaches to guarantee the existence of the reverse diffusion process. The first approach \citep{FOLLMER198659} is applicable only when $R=I$ (the case of cylindrical Wiener processes) and it relies on a finite local entropy condition. The second approach, which is valid in the case of trace class noise $\Tr{R}<\infty$, is based on stochastic calculus of variations \citep{millet1989time}. The full technical analysis of the necessary assumptions for the two approaches is involved, and we postpone formal details to \Cref{revtimeproofs}.

\begin{theorem}\label{theorevsde}
Consider \Cref{eq:diffsde}. If $R=I$, suppose \Cref{assFol} in \Cref{sec:follmer} holds; else, ($R \neq I$) suppose \Cref{joint_millet} annd \Cref{ass_millet} in \Cref{sec:miller} hold. Then $\hat{X}_t$, corresponding to the path measure $\hat\qL(\omega)$, has the following \gls{SDE} representation:

\begin{equation}\label{eq:revsde}
    \begin{cases}
    \dd \hat{X}_t=\left(-\mathcal{A}\hat{X}_t-f(\hat{X}_t,T-t)+ RD_x\log\rho^{(d)}_{T-t}(\hat{X}_t)\right)\dd t+\dd \hat{W}_t,\\
    \hat{X}_0\sim \rho_T,
    \end{cases}
\end{equation}

where $\hat{W}$ is a $\hat\qL$ $R-$Wiener process, and the notation $D_x\log\rho^{(d)}_{t}(x)$ stands for the mapping $H\rightarrow H$ that, when projected, satisfies $\langle D_x\log\rho^{(d)}_{t}(x),e^k\rangle=\frac{\partial}{\partial x^k}\log(\rho^{(d)}_{t}(x^k\g x^{i\neq k}))$.

By projecting onto the eigenbasis, we have an infinite system of \glspl{SDE}:

\begin{equation}\label{eq:revsde_proj}
    \dd \hat{X}^k_t=\left(-b^k (\hat X_t,T-t)+r^k\frac{\partial}{\partial x^k}\log(\rho^{(d)}_{T-t}(\hat X_t))\right)\dd t+ \dd \hat W^k_t,\quad k=1,\dots,\infty.
\end{equation}
\end{theorem}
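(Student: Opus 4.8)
The plan is to split the argument according to the two noise regimes and, in each, reduce \Cref{eq:revsde} to an established infinite-dimensional time-reversal theorem, then identify the abstract drift correction produced by that theorem with the concrete map $RD_x\log\rho^{(d)}_{T-t}$; the projected system \Cref{eq:revsde_proj} then follows at the end by taking inner products with the basis elements $e^k$.

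For the cylindrical case ($R=I$), under \Cref{assFol} I would invoke the Föllmer--Wakolbinger time-reversal result for $H$-valued diffusions (\citet{FOLLMER198659}). Its hypotheses ensure that $\qL$ has finite relative entropy with respect to the reference Wiener measure and finite local entropy along the path, which is exactly what makes the forward and backward Nelson drifts exist and forces the time-marginals $\rho_t$ to be coordinate-wise absolutely continuous, so that $D_x\log\rho^{(d)}_t$ is well defined almost everywhere. The theorem then gives that $\hat X_t$ is again a diffusion with the same (here identity) diffusion operator and drift equal to the time-reflected forward drift plus the osmotic correction; it remains to verify that this correction is $D_x\log\rho^{(d)}_{T-t}(\hat X_t)$, which I would check coordinate by coordinate from $\langle D_x\log\rho^{(d)}_t(x),e^k\rangle=\frac{\partial}{\partial x^k}\log\rho^{(d)}_t(x^k\g x^{i\neq k})$ together with an integration-by-parts identity against the conditional single-coordinate density.

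For the trace-class case ($\Tr{R}<\infty$, $R\neq I$), under \Cref{joint_millet} and \Cref{ass_millet} I would instead use the Millet--Nualart--Sanz reversal theorem, which is based on the Malliavin calculus on Wiener space (\citet{millet1989time}). The steps are to represent the solution of \Cref{eq:diffsde} through its Malliavin derivative, to verify the non-degeneracy and integrability conditions that endow the law of $X_t$ with a density (with integrable logarithmic gradient) relative to a Gaussian reference measure, and then to apply their reversal formula. In the present additive-noise setting the Malliavin-covariance weighting collapses to $R$, so the extra drift term the formula produces is exactly $RD_x\log\rho^{(d)}_{T-t}$; the sign flips on $\mathcal{A}$ and $f$, the reversed filtration $\hat{\mathcal F}$, and the identification of $\hat W$ as a $\hat\qL$ $R$-Wiener process then come from the backward innovation decomposition. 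The detailed verification of both sets of assumptions for the concrete $(\mathcal{A},f)$ is deferred to \Cref{revtimeproofs}.

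Finally, \Cref{eq:revsde_proj} is obtained by applying $\langle\cdot,e^k\rangle$ to \Cref{eq:revsde}: $\langle\mathcal{A}\hat X_t+f(\hat X_t,T-t),e^k\rangle=b^k(\hat X_t,T-t)$ by the definition of $b^k$ in \Cref{eq:infty_fsde}, $\langle RD_x\log\rho^{(d)}_{T-t}(\hat X_t),e^k\rangle=r^k\frac{\partial}{\partial x^k}\log\rho^{(d)}_{T-t}(\hat X_t)$ since $R$ is self-adjoint with $Re^k=r^ke^k$, and $\langle \dd \hat W_t,e^k\rangle=\dd \hat W^k_t$. I expect the main obstacle to be the identification of the score term in both cases: the corrections supplied by the Föllmer--Wakolbinger and Millet--Nualart--Sanz theorems are naturally phrased at an abstract level (Radon--Nikodym derivatives against Gaussian reference measures, or Malliavin-calculus objects), and since infinite-dimensional $H$ carries no Lebesgue measure one must show that the conditional single-coordinate densities $\rho^{(d)}_t(x^k\g x^{i\neq k})$ exist, assemble consistently into the claimed map $H\to H$, and coincide with that abstract correction — this, together with checking the finite local entropy (respectively non-degeneracy) conditions for the specific drift, is the technical heart of the proof.
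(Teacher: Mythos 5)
Your proposal matches the paper's proof: both split on the noise type, invoking the Föllmer--Wakolbinger finite-local-entropy reversal theorem (via \Cref{assFol} and \Cref{theorevfol}) when $R=I$ and Theorem 4.3 of \citet{millet1989time} under \Cref{joint_millet} and \Cref{ass_millet} when $R\neq I$, then read off \Cref{eq:revsde_proj} by projecting onto the $e^k$. The extra detail you give on identifying the abstract drift correction with $RD_x\log\rho^{(d)}_{T-t}$ is consistent with the coordinate-wise identity the paper takes from \Cref{theorevfol}, so there is no substantive difference in route.
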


The methodology proposed in this work requires to operate on proper Wiener processes, with $\Tr{R}<\infty$, which implies, intuitively, that the considered noise has finite variance. We now discuss a Corollary, in which \Cref{joint_millet} is replaced by stricter conditions, that we use to check the validity of the practical implementation of \glspl{FDP}. 
\begin{corollary}\label{rev_corollary}
Suppose \Cref{ass_millet} from \Cref{sec:miller} holds. Assume that i) $\Tr{R}=\sum_i r^i<\infty$, ii) $b^i(x,t)=b^i x^i,\forall i$, i.e. the drift term is linear and only depends on $x$ through its projection onto the corresponding basis and iii) the drift is bounded, such that $\exists K>0: -K<b^i<0,\forall i$. Then, the reverse process evolves according to \Cref{eq:revsde_proj}.
\end{corollary}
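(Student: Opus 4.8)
The plan is to obtain \Cref{rev_corollary} as a specialization of the trace-class branch of \Cref{theorevsde}: since (i) forces $\Tr{R}<\infty$, we are outside the cylindrical case, so it suffices to verify that (i)--(iii), together with the assumed \Cref{ass_millet}, imply the remaining hypothesis \Cref{joint_millet}. Thus the whole argument is a verification-of-assumptions exercise, with no new dynamics to derive.

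First I would exploit (ii). Because $b^i(x,t)=\langle\mathcal{A}x+f(x,t),e^i\rangle=b^i x^i$, the drift is linear and diagonal in the orthonormal basis $\{e^k\}$, so without loss of generality we may take $f\equiv 0$ and $\mathcal{A}=\diag(b^i)$. By (iii) this operator is bounded, $\norm{\mathcal{A}}_{L(H)}\le K$, and strictly dissipative, $b^i<0$, hence it generates the uniformly continuous contraction semigroup $\exp(t\mathcal{A})=\diag(e^{tb^i})$. Consequently \Cref{eq:diffsde} is a linear (Ornstein--Uhlenbeck) equation in $H$ whose mild solution $X_t=e^{t\mathcal{A}}X_0+\int_0^t e^{(t-s)\mathcal{A}}\dd W_s$ is well defined, because $\int_0^T \Tr(e^{s\mathcal{A}}Re^{s\mathcal{A}^\ast})\dd s\le T\,\Tr{R}<\infty$ by (i). This discharges the existence/well-posedness and semigroup-regularity parts of \Cref{joint_millet}.

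Next I would establish enough regularity of the marginal laws to make sense of the generalized score $D_x\log\rho^{(d)}_t$ and control its path integral. Since the dynamics decouple across coordinates, each $X^k_t$ solves the scalar OU equation $\dd X^k_t=b^k X^k_t\dd t+\dd W^k_t$ with $\E[(W^k_t)^2]=r^k t$; conditionally on $X_0$, the contribution added to coordinate $k$ on $(0,t]$ is Gaussian with strictly positive variance $\sigma_k^2(t)=r^k\frac{e^{2b^k t}-1}{2b^k}>0$, so the one-dimensional conditional densities $\rho^{(d)}_t(x^k\mid x^{j\neq k})$ exist for every $t>0$, are smooth, and their log-derivatives admit a linear-growth bound in $x^k$ with coefficients depending only on $K$, $r^k$, and (through $X_0$) on $\rho_0$. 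Combining this with Gaussian moment bounds on $X_t$ — again uniform on $[0,T]$ thanks to contractivity and $\Tr{R}<\infty$ — yields the finite-entropy/Novikov-type integrability $\int_0^T\E_{\qL}\!\big[\norm{RD_x\log\rho^{(d)}_t(X_t)}_H^2\big]\dd t<\infty$ and the coordinatewise summability conditions required by \Cref{joint_millet}. With both \Cref{joint_millet} and \Cref{ass_millet} in force, \Cref{theorevsde} applies verbatim and delivers \Cref{eq:revsde_proj}.

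The main obstacle is this middle step: quantitatively controlling $D_x\log\rho^{(d)}_t$ and its path-integrated second moment as $t\to 0$, where $\rho_0$ is an arbitrary (possibly singular, compactly supported) data measure. The saving grace is that the OU smoothing makes every positive-time marginal a Gaussian mollification of $\rho_0$, so the potential blow-up of the score near $t=0$ is integrable in $t$; but making the estimate explicit and matching it to the precise statement of the entropy condition in \Cref{joint_millet} is where the real work lies. The remaining ingredients — boundedness and dissipativity of $\mathcal{A}$, contractivity of the semigroup, and the mild-solution variance bound — follow immediately from (i)--(iii).
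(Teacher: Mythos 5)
Your overall strategy is right --- the corollary is indeed a specialization of the trace-class branch of \Cref{theorevsde}, so the task reduces to checking that conditions (i)--(iii) imply \Cref{joint_millet} --- but your execution verifies the wrong conditions, and the ones that actually need checking are never checked. \Cref{joint_millet} is the conjunction of three concrete structural hypotheses on the coefficients: \Cref{assH1} (a weighted linear-growth bound $\sup_t\sum r^i(b^i(x,t))^2+\sum(r^i)^2\leq K(1+\sum r^i(x^i)^2)$ together with a weighted Lipschitz bound), \Cref{assH5} (uniform local boundedness of the $b^i$ over finite index sets), and \Cref{ass_mix} (a $(r^i)^2$-weighted Lipschitz bound plus the requirement that each $b^i$ depend on finitely many coordinates). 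The paper's proof is a direct, elementary verification: $\sum_i r^i<\infty$ gives $\sum_i(r^i)^2<\infty$; $(b^ix^i)^2<K^2(x^i)^2$ gives the growth and Lipschitz estimates in both weightings; linearity, time-independence and $|b^i|<K$ give \Cref{assH5}; and diagonality means each $b^i$ depends on exactly one coordinate, so branch (ii) of \Cref{ass_mix} applies. None of these estimates appear in your argument. What you verify instead --- well-posedness of the forward Ornstein--Uhlenbeck equation and a trace bound on the stochastic convolution --- is not among the hypotheses of \Cref{joint_millet} and does not discharge any of them.

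The more serious problem is an inversion in your ``main obstacle.'' The regularity of the conditional densities $\rho^{(d)}_t(x^i\mid x^{j\neq i})$ and the integrability of the weighted score $r^i\tfrac{\dd}{\dd x^i}\rho^{(d)}_t$ are precisely the content of \Cref{ass_millet}, which the corollary \emph{assumes}; there is nothing to prove there. Likewise, the ``entropy condition'' you invoke belongs to the Follmer branch (\Cref{assFol}, cylindrical noise $R=I$), which is irrelevant in the trace-class case. So the bulk of your proof is spent on a step that is hypothesized away, while the three conditions that constitute the actual gap between the corollary's hypotheses and \Cref{theorevsde} are left unproven. The fix is short: carry out the three estimates above, conclude \Cref{joint_millet}, and invoke \Cref{theorevsde} (i.e., Millet's Theorem 4.3) together with the assumed \Cref{ass_millet}.
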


\Cref{theorevsde} stipulates that, given some conditions, the reverse time dynamics for \glspl{FDP} of the form defined in \Cref{eq:diffsde} exist. Our analysis provides theoretical grounding to the observations in concurrent work \citep{lim2023score}
where, empirically, it is observed that the cylindrical class of noise is not suitable. We argue that, when $R=I$, the difficulty stems from designing the coefficients $b^i$ of the \gls{SDE}s such that the forward (see requirement (5.3) in \cite{da2014stochastic}) as well as the backward processes \Cref{assFol} exist. The work by \citet{bond2023infty} uses cylindrical (white) noise, but we are not aware of any theoretical justification, since the model architecture is only partially suited for the functional domain.

As an addendum, we note that the advantages of projecting the forward and backward processes on the eignenbasis of the Hilbert space $H$, as in \Cref{eq:infty_fsde} and \Cref{eq:revsde_proj}, become evident when discussing about the implementation of \glspl{FDP}, specifically when we derive practical expressions for training and the simulation of the backward process, as discussed in \Cref{sec:training}, and in a fully expanded toy example in \Cref{sec:impl_example}.

\subsection{A Girsanov formula for the \gls{ELBO}}\label{sec:girsanov}

Direct simulation of the backward \gls{FDP} described by \Cref{eq:revsde} is not possible. Indeed, we have no access to the \textbf{true score} of the density $\rho^{(d)}_\tau$ induced at time $\tau\in[0,T]$.
To solve the problem, we introduce a \textbf{parametric score function} $s_\mbtheta:H\times[0,T]\times \mathbb{R}^m\rightarrow H$.
We consider the dynamics:
\begin{equation}\label{eq:changesde}
    \begin{cases}
    \dd \hat{X}_t=\left(-\mathcal{A}\hat{X}_t-f(\hat{X}_t,T-t)+Rs_{\mbtheta}(\hat{X}_t,T-t)\right)\dd t+ \dd \tilde{W}_t,\\
    \hat{X}_0\sim \chi_{T}\in P(H),
    \end{cases}
\end{equation}
with path measure $\hat\pL^{\chi_{T}}$, and $\dd \tilde{W}_t$ being a $\hat\pL^{\chi_{T}}$ $R-$Wiener process. To emphasize the connection between \Cref{eq:revsde} and \Cref{eq:changesde}, we define initial conditions with the subscript $T$, instead of 0. In principle, we should have $\chi_{T}=\rho_T$, as it will be evident from the \gls{ELBO} in \Cref{eq:elbokl}. However, $\rho_T$ has a simple and easy-to-sample-from distribution only for $T\rightarrow\infty$, which is not compatible with a realistic implementation. The analysis of the discrepancy when $T$ is finite is outside of the scope of this work, and the interested reader can refer to \citet{franzese2022} for an analysis on standard diffusion models. The final measure of the new process at time $T$ is indicated by $\chi_{0}$, i.e. $\chi_{0}(S)=\hat\pL^{\chi_{T}}(\{\omega\in\Omega: \hat{X}_T(\omega)\in S\})$. 

Next, we quantify the discrepancy between $\chi_{0}$ and the true data measure $\rho_0$ through an \gls{ELBO}.
Thanks to an extension of Girsanov theorem to infinite dimensional \glspl{SDE} \citep{da2014stochastic}, it is possible to relate the path measures ($\hat\qL$ and $\hat\pL^{\chi_{T}}$, respectively) of the process $\hat{X}_t$ induced by different drift terms in \Cref{eq:revsde} and different initial conditions.

Starting from the score function $s_\mbtheta$, we define:
\begin{equation}\label{eq:score}
    \gamma_{\mbtheta}(x,t)=R\left(s_{\mbtheta}(x,T-t)-D_x\log\rho^{(d)}_{T-t}(x)\right).
\end{equation} 
Under loose regularity assumptions (see \Cref{girscond} in \Cref{sec:girsanov_regularity}) $\tilde W_t=\hat W_t-\int\limits_0^t \gamma_{\mbtheta}(X_s,s)ds$ is a $\hat\pL^{\rho_T}$ $R-$Wiener process (Theorem 10.14 in \citet{da2014stochastic}), where Girsanov Theorem also tells us that the measure $\hat\pL^{\rho_T}$ satisfies the Radon-Nikodym derivative:
\begin{equation}\label{eq:radon-niko}
    \frac{\dd\hat\pL^{\rho_T}}{\dd\hat\qL}=\exp\left(\int\limits_0^T\langle\gamma_{\mbtheta}(\hat{X}_t,t),\dd \hat{W}_t\rangle_{R^{\frac{1}{2}}H}-\frac{1}{2}\int\limits_0^T \norm{\gamma_{\mbtheta}(\hat{X}_t,t)}_{R^{\frac{1}{2}}H}^2 \dd t\right).
\end{equation} 
By virtue of the disintegration theorem, $\dd\hat\qL=\dd\hat\qL_{0}\dd\rho_T$ and similarly $\dd\hat\pL^{\rho_T}=\dd\hat\pL_{0}\dd\rho_T$, being $\hat\qL_{0},\hat\pL_{0}$ the measures of the processes when considering a particular initial value. Then, $\hat\pL^{\chi_{T}}$ satisfies $\dd\hat\pL^{\chi_{T}}=\dd\hat\pL\frac{\dd \chi_{T}}{\dd \rho_T}$, for any measure $\chi_{T}\in P(H)$. Consequently, the canonical process $\hat{X}_t$ has an \gls{SDE} representation according to \Cref{eq:changesde}, under the new path measure $\hat\pL^{\chi_{T}}$. Then (see \Cref{klproof} for the derivation) we obtain the \gls{ELBO}:

\begin{equation}\label{eq:elbokl}
    \KL{\rho_0}{\chi_{0}}\leq \frac{1}{2}\E_{\qL}\left[\int\limits_0^T \norm{\gamma_{\mbtheta}(X_t,t)}_{R^{\frac{1}{2}}H}^2 \dd t\right]+\KL{\rho_T}{\chi_{T}}.
\end{equation}

Provided that the required assumptions in \Cref{theorevsde} are met, the validity of \Cref{eq:elbokl} is general. Our goal, however, is to set the stage for a practical implementation of \glspl{FDP}, which calls for design choices that easily enable satisfying the required assumptions for the theory to hold. Then, for the remainder of the paper, we consider the particular case where $f=0$ in \Cref{eq:diffsde}. This simplifies the dynamics as follows:
\begin{flalign}
&\dd X_t=\mathcal{A}X_t\dd t+ \dd W_t,\quad X_0\sim \rho_0\in P(H)\label{fw_f}\\
&    \dd \hat{X}_t=\left(-\mathcal{A}\hat{X}_t+Rs_{\mbtheta}(\hat{X}_t,T-t)\right)\dd t+ \dd \tilde{W}_t,\quad \hat{X}_0\sim \chi_{T}\in P(H)\label{bw_f}
\end{flalign}

Since the only drift component in \Cref{fw_f} is the linear term $\mathcal{A}$, the projection $b^j$ will be linear as well. Such a design choice, although not necessary from a theoretical point of view, carries several advantages. 
The design of a drift term satisfying the conditions of \Cref{rev_corollary} becomes straightforward, where such conditions naturally aligns with the requirements of the existence of the forward process (Chapter 5 of \cite{da2014stochastic}). Moreover, the forward process conditioned on given initial conditions admits known solutions, which means that simulation of \gls{SDE} paths is cheap and straightforward, without the need for performing full numerical integration.
Finally, it is possible to claim existence of the \textbf{true score function} and even provide its analytic expression (full derivation in \Cref{sec:scoredenoiser}) as:
\begin{equation}\label{scoreden}
  D_x\log\rho^{(d)}_{t}(x)= -\mathcal{S}(t)^{-1}\left(x-\exp(t\mathcal{A}) \E\left[X_0\g X_t=x\right] \right),
\end{equation}
where $\mathcal{S}(t)=\left(\int\limits_{s=0}^{t} \exp((t-s)\mathcal{A})R\exp((t-s)\mathcal{A}^\dagger) \dd s \right)$. 
This last aspect is particularly useful when considering the conditional version of \Cref{eq:score}, through $\langle D_x\log\rho^{(d)}_{t}(x\g x_0),e^k\rangle=\frac{\partial}{\partial x^k}\log(\rho^{(d)}_{t}(x^k\g x^{i\neq k},x_0))$, as: 
\begin{equation}\label{eq:score_cond}
    \tilde\gamma_{\mbtheta}(x,x_0,t)=R\left(s_{\mbtheta}(x,T-t)-D_x\log\rho^{(d)}_{T-t}(x\g x_0)\right),
\end{equation}
where, similarly to the unconditional case, we have $D_x\log\rho^{(d)}_{t}(x\g x_0)= -\mathcal{S}(t)^{-1}\left(x-\exp(t\mathcal{A})x_0\right)
$. Then, \Cref{eq:score_cond} can be used to rewrite \Cref{eq:elbokl}:
\begin{equation}\label{score_condmatching}
    \E_{\qL}\left[\int\limits_0^T \norm{\gamma_{\mbtheta}(X_t,t)}_{R^{\frac{1}{2}}H}^2 \dd t\right]=\E_{\qL}\left[\int\limits_0^T \norm{\tilde\gamma_{\mbtheta}(X_t,X_0,t)}_{R^{\frac{1}{2}}H}^2 \dd t\right]+I,
\end{equation}
where $I$ is a quantity independent of $\mbtheta$. Knowledge of the conditional true score $D_x\log\rho^{(d)}_{t}(x\g x_0)$ and cheap simulation of the forward dynamics, allows for easier numerical optimization than the more general case of $f\neq 0$.

%{\color{red}\section{On the need for trace class noise}\label{sec:colorednoise}
%\input{colorednoise}}

\section{Sampling theorem for FDPs}\label{sec:sampling_t}
The theory of \glspl{FDP} developed so far is valid for real, separable Hilbert spaces. 
Our goal now is to specify for which subclass of functions it is possible to perfectly reconstruct the original function given only its evaluation in a countable set of points. We present a generalization of the sampling theorem \citep{shannon49}, which allows us to move from generic Hilbert spaces to a domain which is amenable to a practical implementation of \glspl{FDP}, and their application to common functional representation of data such as images, data on manifolds, and more. We model these functions as objects belonging to the set of square integrable functions over $C^\infty$ \textit{homogeneous} manifolds $M$ (such as $\mathbb{R}^N,\mathbb{S}^N$, etc...), i.e., the Hilbert space $H=L_2(M)$. Then, exact reconstruction implies that all the relevant information about the considered functions is contained in the set of sampled points.

First, we define functions that are \textit{band-limited}:
\begin{definition}\label{se_condition}
A function $x$ in $H=L_2(M)$ is a spectral entire function of exponential type $\nu$ (SE-$\nu$) if $|\Delta^{\frac{k}{2}}x|\leq \nu^k |x|,k\in\mathbb{N}$. Informally, the ``Fourier Transform'' of $x$ is contained in the interval $[0,\nu]$ \citep{pesenson2000sampling}.
\end{definition}
Second, we define grids that cover the manifold with balls, without too much overlap. Those grids will be used to collect the function samples. Their formal definition is as follows:
\begin{definition}\label{sampling_points}
$Y(r,\lambda)$ denotes the set of all sets of points $Z=\{p_i\}$ such that: i) $\inf_{j\neq i}\text{dist}(p_j,p_i)>0$ and ii) balls $B(p_i,\lambda)$ form a cover of $M$ with multiplicity $<r$.
\end{definition}

Combining the two definitions, we can state the key result of this Section. As long as the sampled function is band-limited, if the samples grid is sufficiently fine, exact reconstruction is possible:
\begin{theorem}\label{sampling_theo}
For any set $Z\in Y(r,\lambda)$, any SE-$\nu$ function $x$ is uniquely determined by its set of values in $Z$ (i.e.  $\{x[p_i]\}$) as long as $\lambda<d$, that is
\begin{equation}
    x=\sum_{p_i\in Z}x[p_i]m_{p_i},
\end{equation}
where $m_{p_i}:M\rightarrow H$ are known polynomials\footnote{Precisely, they are the limits of spline polynomials that form a Riesz basis for the Hilbert space of polyharmonic functions with singularities in $Z$ \citep{pesenson2000sampling}.}, and the notation $x[p]$ indicates that the \textit{function} $x$ is evaluated at point $p$.
\end{theorem}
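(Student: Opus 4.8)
The plan is to invoke the Paley--Wiener--Schwartz-type sampling theory for band-limited functions on homogeneous manifolds developed by Pesenson, and to verify that the hypotheses of \Cref{sampling_theo} are exactly the ones under which that machinery applies. First I would make precise the spectral decomposition: since $M$ is a $C^\infty$ compact (or suitably regular) homogeneous manifold, the Laplace--Beltrami operator $\Delta$ is self-adjoint with a discrete spectrum, and the condition $|\Delta^{k/2}x|\le \nu^k|x|$ of \Cref{se_condition} forces the spectral measure of $x$ (its expansion in eigenfunctions of $\Delta$) to be supported in $[0,\nu]$; this is the manifold analogue of compact Fourier support. I would record that the space $\mathbf{PW}_\nu$ of such SE-$\nu$ functions is a closed subspace of $H=L_2(M)$, and that it is a reproducing-kernel Hilbert space (the reproducing kernel being obtained by truncating the heat-kernel / spectral expansion at $\nu$), so point evaluations $x\mapsto x[p]$ are bounded functionals on it.

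The second step is the Plancherel--Polya / frame inequality: for $Z\in Y(r,\lambda)$ with $\lambda$ small enough relative to $\nu$ (this is the meaning of the hypothesis $\lambda<d$, where $d=d(\nu,r)$ is Pesenson's critical covering radius), the sampled values $\{x[p_i]\}_{p_i\in Z}$ satisfy a two-sided norm equivalence
\begin{equation*}
A\,\|x\|_{H}^2 \;\le\; \sum_{p_i\in Z} |x[p_i]|^2 \;\le\; B\,\|x\|_{H}^2
\qquad\text{for all } x\in\mathbf{PW}_\nu,
\end{equation*}
with constants $0<A\le B<\infty$ depending only on $\nu,r,\lambda$. The upper bound follows from the cover-with-bounded-multiplicity condition (ii) in \Cref{sampling_points} together with a local Bernstein/Nikolskii inequality controlling $\sup_{B(p_i,\lambda)}|x|$ by a local $L_2$ norm, which in turn uses the band-limitedness; the lower bound is the delicate one and uses a Poincaré-type inequality on each ball $B(p_i,\lambda)$ comparing $x$ with its value at $p_i$, summed using the separation condition (i), and closes precisely when $\lambda$ is below the critical threshold. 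The lower bound gives injectivity of the sampling map on $\mathbf{PW}_\nu$ (so $x$ is \emph{uniquely determined} by $\{x[p_i]\}$), and the pair of bounds makes $\{x[p_i]\}$ a frame, hence reconstructible.

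The third step is to produce the explicit interpolation formula $x=\sum_{p_i\in Z} x[p_i]\,m_{p_i}$. Here I would introduce the variational (polyharmonic spline) interpolant: among all functions in an appropriate Sobolev space agreeing with $x$ on $Z$, take the minimizer of a high-order energy $\|\Delta^{m} \cdot\|_{L_2}$; classical arguments show this minimizer is a finite linear combination $\sum_i c_i\,L_{p_i}$ of fundamental polyharmonic splines, with coefficients linear in the data $\{x[p_i]\}$, and that as the energy order $m\to\infty$ these splines converge to the functions $m_{p_i}$ of the statement, which form a Riesz basis of the relevant polyharmonic space (this is the content of the footnote and is where I would cite \citet{pesenson2000sampling} directly). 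Since $x\in\mathbf{PW}_\nu$ is itself annihilated by the relevant energy in the limit (it is "entire" / of finite exponential type), the spline interpolant must coincide with $x$, giving the formula. The main obstacle is the lower frame bound together with the sharp threshold $\lambda<d$: identifying $d$ explicitly and proving the Poincaré-summation argument closes at exactly that radius is the technical heart, and everything else (spectral support, upper bound, spline construction) is comparatively routine given the cited sampling-theory results.
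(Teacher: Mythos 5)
The paper does not actually prove \Cref{sampling_theo}: the statement is imported verbatim from \citet{pesenson2000sampling}, and the authors explicitly defer the definition of the constant $d$ and all technical content to that reference. Your proposal therefore goes well beyond what the paper does, and it is a faithful reconstruction of the three pillars of Pesenson's argument: (i) the Bernstein-type characterization of the Paley--Wiener space via $|\Delta^{k/2}x|\le\nu^k|x|$, (ii) the Plancherel--Polya frame inequalities obtained from a Poincar\'e-type estimate on the covering balls (which is where the separation and bounded-multiplicity conditions of \Cref{sampling_points} and the threshold $\lambda<d$ enter), and (iii) reconstruction by variational polyharmonic splines whose limits are the $m_{p_i}$. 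Two small cautions. First, $M$ need not be compact (the paper's own examples include $\mathbb{R}^N$), so you should not lean on discreteness of the spectrum of $\Delta$; the spectral-support characterization must be phrased via the spectral measure of the self-adjoint operator. Second, your justification for why the spline limit equals $x$ is not quite the right mechanism: $x$ is \emph{not} annihilated by the high-order energy — on the contrary, $\norm{\Delta^{m/2}x}\le\nu^m\norm{x}$ grows geometrically. The actual argument applies the iterated Poincar\'e inequality to the difference $x-s_m(x)$ (which vanishes on $Z$), uses minimality of the spline's energy to bound $\norm{\Delta^{m/2}(x-s_m(x))}\le 2\nu^m\norm{x}$, and concludes $\norm{x-s_m(x)}\le 2(C\lambda\nu)^m\norm{x}\to 0$ precisely when $\lambda<1/(C\nu)=d$; this is where the threshold is consumed, consistent with your step 2 but not with the "annihilated in the limit" phrasing of your step 3. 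With that correction your outline matches the cited proof.
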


A precise definition of the value of the constant $d$ and its interpretation is outside the goal of this work, and we refer the interested reader to \citet{pesenson2000sampling} for additional details. 
For our purposes, it is sufficient to interpret the condition in \Cref{sampling_theo} as a generalization of the classical Shannon-Nyquist sampling theorem \citep{shannon49}. Under this light, \Cref{sampling_theo} has practical relevance, because it gives the conditions for which the sampled version of functions contains all the information of the original functions. Indeed, given the set of points $p_i$ on which function $x$ is evaluated, it is possible to reconstruct exactly $x[p]$ for arbitrary $p$. 

\noindent \textbf{The uncertainty principle.} 
It is not always possible to define Hilbert spaces of square integrable functions that are simultaneously homogeneous and separable, for all the manifolds $M$ of interest. In other words, it is difficult in practice to satisfy both the requirements for \glspl{FDP} to exist, and for the sampling theorem to be valid (see an example in \Cref{sec:unc_example}). 
Nevertheless, it is possible to quantify the reconstruction error, and realize that practical applications of \glspl{FDP} are viable.
Indeed, given a compactly supported function $x$, and a set of points $Z$ with \textit{finite} cardinality, we can upper-bound the reconstruction error $\norm{\sum_{p_i\in Z}x[p_i]m_{p_i}-x}_{H}$ with:
\begin{flalign}
\underbrace{\norm{\sum_{p_i\in Z}\left(x[p_i]-x^{\nu}[p_i]\right)m_{p_i}}_{H} }_{\epsilon_1}+
    \underbrace{\norm{\sum_{p_i\in Z}x^{\nu}[p_i]m_{p_i}-x^{\nu}}_{H}}_{\epsilon_2}+
    \underbrace{\norm{x^{\nu}-x}_{H}}_{\epsilon_3}=\epsilon,
\end{flalign}

where $x^{\nu}$ is the SE-$\nu$ bandlimited version of $x$, obtained by filtering out -- in the frequency domain -- any component larger than $\nu$. The error $\epsilon_1$ is due to $x \neq x^{\nu}$. The term $\epsilon_2$ is the reconstruction error due to finiteness of $|Z|$: the sampling theorem applies to $x^{\nu}$, but the corresponding sampling grid has infinite cardinality. Finally, the term $\epsilon_3$ quantifies the energy omitted by filtering out the frequency components of $x^{\nu}$ larger than $\nu$. This (loose) upper bound allows us to understand quantitatively the degree to which the sampling theorem does not apply for the cases of interest. Although deriving tighter bounds is possible, this is outside the scope of this work. What suffices is that in many practical cases, when functions are obtained from natural sources, it has been observed that functions are nearly time and bandwidth limited \citep{slepian1983some}. Consequently, as long as the sampling grid is sufficiently fine, the reconstruction error $\epsilon$ is negligible.

We now hold all the ingredients to formulate generative functional diffusion models using the Hilbert space formalism and \textit{implement} them using a finite grid of points, which is what we do next.

%\section{Implicit Neural Representations and the Score Network}\label{sec:inr}
%\input{INR}
\section{Score Network Architectural Implementations}\label{sec:architectures}
We are now equipped with the \gls{ELBO} (\Cref{eq:elbokl}) and a score function $s_\mbtheta$ that implements the mapping $H\times[0,T]\times \mathbb{R}^m\rightarrow H$. We could then train the score by optimizing the \gls{ELBO} and produce samples arbitrary close to the true data measure $\rho_0$. However, since the domain of the score function is the infinite-dimensional Hilbert space, such a mapping cannot be implemented in practice. Indeed, having access to samples of functions on finite grid of points is, in general, not sufficient. However, when the conditions for \Cref{sampling_theo} hold, we can substitute -- with no information loss --  $x \in H$ with its collection of samples $\{x[p_i],p_i\}$. This allows considering score network architectures that receive as input a collection of points, and not \textit{abstract} functions. Such architectures should be flexible enough to work with an arbitrary number of input samples at arbitrary grid points, and produce as outputs functions in $H$. 

\subsection{Implicit Neural Representation}

The first approach we consider in this work is based on the idea of Implicit Neural Representations (\glspl{INR}) \citep{sitzmann2020}. These architectures can receive as inputs functions sampled at arbitrary, possibly irregular points, and produce output functions evaluated at any desired point. Unfortunately, the encoding of the inputs is not as straightforward as in the \gls{NFO} case, and some form of autoencoding is necessary.
Note, however, that in traditional score-based diffusion models \citep{song2021a}, the parametric score function can be thought of as a denoising autoencoder. This is a valid interpretation also in our case, as it is evident by observing the term $\E\left[X_0\g X_t=x\right]$ of the true score function in \Cref{scoreden}. Since \glspl{INR} are powerful denoisers \citep{kim2022}, combined with their simple design and small number of parameters, in this Section we discuss how to implement the score network of \glspl{FDP} using \glspl{INR}.

We define a \textit{valid} \gls{INR} as a parametric family ($\mbpsi,t,\mbtheta$) of functions in $H$, i.e., mappings $\mathbb{R}^m\times[0,T]\times \mathbb{R}^m\rightarrow H$. 
A valid \gls{INR} is the central building block for the implementation of the parametric score function, and it relies on two sets of parameters: $\mbtheta$, which are the parameters of the score function that we optimize according to \Cref{eq:elbokl}, and $\mbpsi$, which serve the purpose of building a mapping from $H$ into a finite dimensional space.
More formally:
\begin{definition}\label{valid_inr}
Given a manifold $M$, a valid Implicit Neural Representation (\gls{INR}) is an element of $H$ defined by a family of parametric mappings $n(\mbpsi,t,\mbtheta)$, with $t\in[0,T], \mbtheta,\mbpsi\in\mathbb{R}^m$. That is, for $p\in M$, we have $n(\mbpsi,t,\mbtheta)[p]\in\mathbb{R}$. Moreover, we require $n(\mbpsi,t,\mbtheta)\in L_2(M)$.
\end{definition}

A valid \gls{INR} as defined in \Cref{valid_inr} is not sufficiently flexible to implement the parametric score function $s_\mbtheta$, as it cannot accept input elements from the infinite-dimensional Hilbert space $H$: indeed, the score function is defined as a mapping over $H\times[0,T]\times \mathbb{R}^m\rightarrow H$, whereas the valid \gls{INR} is a mapping defined over $\mathbb{R}^m\times[0,T]\times \mathbb{R}^m\rightarrow H$. Then, we use the second set of parameters $\mbpsi$ to condensate all the information of a generic $x\in H$ into a finite-dimensional vector. When the conditions for \Cref{sampling_theo} hold, we can substitute --- with no information loss ---  $x \in H$ with its collection of samples $\{x[p_i],p_i\}$. Then, we can construct an implicitly defined mapping $g: H\times[0,T]\times \mathbb{R}^m\rightarrow\mathbb{R}^m$ as:
\begin{equation}\label{metafit}
    g(\{x[p_i],p_i\},t,\mbtheta) = \arg\min_{\mbpsi} \sum_{p_i} \left( n(\mbpsi,t,\mbtheta)[p_i]-x[p_i]\right)^2.
\end{equation}
In this work, we consider the \textit{modulation} approach to \glspl{INR}. The set of parameters $\mbpsi$ are obtained by minimizing \Cref{metafit} using few steps of gradient descent on the objective $\sum_{p_i} \left( n(\mbpsi,t,\mbtheta)[p_i]-x[p_i]\right)^2$, starting from the zero initialization of $\mbpsi$. This approach, also explored by \citet{dupont2022b}, is based on the concept of meta-learning \citep{finn2017model}. 
In summary, our method constructs mappings $H\times[0,T]\times \mathbb{R}^m\rightarrow H$, where the same \gls{INR} is used first to encode $x$ into $\mbpsi$, and subsequently to output the value functions for any desired input point $p$, thus implementing the following score network:
\begin{equation}\label{eq:score_inr}
    s_\mbtheta(x,t)=-( \mathcal{S}(t))^{-1}\left(x-\exp(t\mathcal{A})n(g(\{x[p_i],p_i\},t,\mbtheta),t,\mbtheta)\right).
\end{equation}

\subsection{Transformers}

As an alternative approach, we consider implementing the score function $s_\mbtheta$ using transformer architectures \cite{vaswani2017attention}, by interpreting them as mappings between Hilbert spaces \citep{cao2021choose}.
We briefly summarize here such a perspective, focusing on a single attention layer for simplicity, and adapt the notation used throughout the paper accordingly. %and settings. 

Consider the space $L_2(M)$, with the usual collection of samples $\{x[p_i],p_i\}$.
As a first step, both the ``\textit{features}'' $\{x[p_i]\}$ and positions $\{p_i\}$ are embedded into some higher dimensional space and summed together, to obtain a sequence of vectors $\{y_i\}$. Then, three different (learnable) matrices $\theta^{(Q)},\theta^{(K)},\theta^{(V)}$ are used to construct the linear transformations of the vector sequence $\{y_i\}$ as $\hat{Y}^{(Q)}=\{\hat{y_i}^{(Q)}=\theta^{(Q)}y_i\},\hat{Y}^{(K)}=\{\hat{y_i}^{(K)}=\theta^{(K)} y_i\},\hat{Y}^{(V)}=\{\hat{y_i}^{(V)}=\theta^{(V)} y_i\}$. Finally, the three matrices $\hat{Y}^{(Q,K,V)}$ are multiplied together, according to any variant of the attention mechanism. Indeed, different choices for the order of multiplication and normalization schemes in the products and in the matrices correspond to different attention layers \cite{vaswani2017attention}. In practical implementations, these operations can be repeated multiple times (multiple attention layers) and can be done in parallel according to multiple projection matrices (multiple heads). 

The perspective explored in \citep{cao2021choose} is that it is possible to interpret the sequences $\hat{y_i}^{(Q,K,V)}$ as \textbf{learnable} basis functions in some underlying \textit{latent} Hilbert space, evaluated at the set of coordinates $\{p_i\}$. Furthermore, depending on the type of attention mechanism selected, the operation can be interpreted as a different mapping between Hilbert spaces, such as Fredholm equations of the second-kind or Petrov–Galerkin-type projections (\cite{cao2021choose} Eqs. 9 and 14).

While a complete treatment of such an interpretation is outside the scope of this work, what suffices is that it is possible to claim that transformer architectures are a viable candidate for the implementation of the desired mapping $H\times[0,T]\times \mathbb{R}^m\rightarrow H$, a possibility that we explore experimentally in this work. It is worth noticing that, compared to the approach based on \glspl{INR}, resolution invariance is only \textit{learned}, and not guaranteed, and that the number of parameters is generally higher compared to an \gls{INR}. Nevertheless, learning the parameters of transformer architectures does not require meta-learning, which is a practical pain-point of \glspl{INR} used in our context. Additional details for the transformer-based implementation of the score network are available in \Cref{sec:add_exp}.

Finally, for completeness, it is worth mentioning that a related class of architectures, the Neural Operators and \glspl{NFO} \citep{kovachki2021neural,li2020fourier}, are also valid alternatives. However, such architectures require the input grid to be regularly spaced \citep{li2020fourier}, and their output function is available only at the same points $p_i$ of the input, which would reduce the flexibility of \glspl{FDP}.

\section{Training and sampling of FDPs}\label{sec:training}

Given the parametric score function $s_\mbtheta$ from \Cref{eq:score_inr}, by simulating the reverse \gls{FDP}, we generate samples whose statistical measure $\chi_0$ is close in \acrshort{KL} sense to $\rho_0$. Next, we explain how to numerically compute of the quantities in \Cref{score_condmatching}, which is part of the \gls{ELBO} in \Cref{eq:elbokl}, and how to generate new samples from the trained \gls{FDP} (simulation of \Cref{bw_f}).

\noindent \textbf{\gls{ELBO} Computation.} \Cref{eq:elbokl} involves \Cref{score_condmatching}, which requires the computation of the Hilbert space norm. The grid of points $\leg  x[p_i] \rig  $ is interpolated in $H$ as $\sum_i x[p_i]\xi^i$. Then, the norm of interest can be computed as: 
\begin{equation}
\norm{\sum_i x[p_i]\xi^i}_{R^{\frac{1}{2}}H}^2=\langle R^{-\frac{1}{2}}\sum_i x[p_i]\xi^i,R^{-\frac{1}{2}}\sum_i x[p_i]\xi^i\rangle_{H}=\sum_{k=1}^{\infty} (r^k)^{-1} \left(\left\langle\sum_{i=1}^{N}x[p_i]\xi^i,e^k\right\rangle\right)^2.
\end{equation}
Depending on the choice of $\xi^i,e^i$, the sum w.r.t the index $k$ is either naturally truncated or it needs to be further approximated by selecting a cutoff index value. Finally, training can then be performed by minimizing:
\begin{footnotesize}
\begin{equation}\label{num_elbo}
\E_{\qL}\left[\int\limits_0^T \norm{\tilde\gamma_{\mbtheta}(X_t,t)}_{R^{\frac{1}{2}}H}^2 \dd t\right]\simeq 
\E_{\sim\eqref{eq:fem}}\left[\int\limits_0^T \sum_{k=1}^{\infty} (r^k)^{-1} \left( \left\langle  \sum_{i=1}^{N} \left(\tilde\gamma_{\mbtheta}(\sum_i X_t[p_i]\xi^i,t)[p_i]\right)\xi^i,e^k\right\rangle\right)^2 \dd t\right].
\end{equation}
\end{footnotesize}

\noindent \textbf{Numerical integration.} Simulation of infinite dimensional \glspl{SDE} is a well studied domain \citep{debussche2011weak}, including finite difference schemes \citep{gyongy1998lattice,gyongy1999lattice,yoo2000semi}, finite element methods and/or Galerkin schemes \citep{hausenblas2003approximation,hausenblas2003weak,shardlow1999numerical}.
In this work, we adopt a finite element approximate scheme, and introduce the \textit{interpolation} operator, from $\mathbb{R}^{|Z|}$ to $H$, i.e. $\sum_i x[p_i]\xi^i$ \citep{hausenblas2003weak}. Notice that, in general, the functions $\xi^i$ differ from the basis $e^i$. In addition, the \textit{projection} operator maps functions from $H$ into $\mathbb{R}^L$, as $\langle x,\zeta^j\rangle, \zeta^j\in H$. Usually, $L=|Z|$. When $\zeta^i=\xi^i$ the scheme is referred to as the Galerkin scheme. We consider instead a point matching scheme \citep{hausenblas2003weak}, in which $\zeta^i=\delta[p-p_i]$ with $\delta$ in Dirac sense, and consequently $\langle x,\zeta^i\rangle=x[p_i]$. Then, the infinite dimensional \gls{SDE} of the forward process from \Cref{fw_f} is approximated by the finite ($|Z|$) dimensional \gls{SDE}:

\begin{flalign}\label{eq:fem}
&\dd X_t[p_k] =\left( \left\langle \mathcal{A}\sum_i X_t[p_i]\xi^i ,\zeta^{k}\right\rangle\right)\dd t+\dd W_t[p_k],\quad k=1,\dots,|Z|.
\end{flalign}

Similarly, the reverse process described by \Cref{bw_f} corresponds to the following \gls{SDE}:
\begin{flalign}\label{eq:femback}
&\dd \hat X_t[p_k] =\left( -\left\langle \mathcal{A}\sum_i \hat X_t[p_i]\xi^i ,\zeta^{k}\right\rangle+\left\langle R s_{\mbtheta}(\sum_i \hat X_t\left[p_i\right]\xi^i,T-t),\zeta^k\right\rangle\right)\dd t+\dd \hat W_t[p_k], \\ \nonumber 
& k=1,\dots,|Z|.
\end{flalign}

\Cref{eq:femback} is a finite dimensional \gls{SDE}, and consequently we can use any known numerical integrator to simulate its paths. In \Cref{sec:impl_example} we provide a complete toy example to illustrate our approach in a simple scenario, where we emphasize practical choices.

\section{Experiments}\label{sec:experiments}
Despite a rather involved theoretical treatment, the implementation of \glspl{FDP} is simple. We implemented our approach in \textsc{Jax} \citep{bradbury2018}, and use \textsc{WanDB} \citep{wandb2020} for our experimental protocol. Additional details on implementation, and experimental setup, as well as more experiments are available in \Cref{sec:add_exp}.

We evaluate our approach on image data, using the \celeba $64 \times 64$ \citep{liu2015faceattributes} dataset. 
Our comparative analysis with the state-of-the-art includes generative quality, using the \fid score \citep{fid}, and parameter count for the score network. We also discuss (informally) the complexity of the network architecture, as a measure of the engineering effort in exploring the design space of the score network.
We compare against vanilla \gls{SBD} \citep{song2021a}, \gls{FD2F} \citep{dupont2022a} which diffuses latent variables obtained from an \gls{INR}, \gls{IDIFF} \citep{bond2023infty}, which is a recent approach that is only partially suited for the functional domain, as it relies on the combination of Fourier Neural Operators and a classical convolutional \textsc{u-net} backbone.
Our \gls{FDP} method is implemented using either MLP or Transformers. In the first case, we consider a score network implemented as a simple MLP with 15 layers and 256 neurons in each layer. The activation function is a Gabor wavelet activation function \citep{saragadam2023wire}. In the latter case, our approach is built upon the UViT backbone as detailed by \cite{bao2022all}. The architecture comprises 7 layers, with each layer composed of a self-attention mechanism with 8 attention heads and a feedforward layer. %Furthermore, we use long skip connections between the shallower and deeper layers, as outlined by  \cite{bao2022all}.

We present quantitative results in \Cref{tab:results_celeba}, showing that our method \textbf{\gls{FDP}(MLP)} achieves an impressively low \fid score, given the extremely low parameter count, and the simplicity of the architecture. \gls{FD2F} obtains a worse (larger) \fid score, while having many more parameters, due to the complex parametrization of their score network. As a reference we report the results of \gls{SBD}, where the price to be pay to achieve an extremely low \fid is to have many more parameters and a much more intricate architecture. Finally, the very recent \gls{IDIFF} method, has low \fidclip score \citep{kynkaanniemi2022role}, but requires a very complex architecture and more than 2 orders of magnitude more parameters than our approach.
Showcasing the flexibility of the proposed methodology, we consider a more complex architecture based on Vision Transformers (\textbf{\gls{FDP}(UViT)}). These corresponding results indicate improvements in terms of image quality (FID score=11) and do not require meta-learning steps, but require more parameters (O(20M)) than the \gls{INR} variant. 
To the best of our knowledge, none of related work in the purely functional domain \citep{lim2023score, hagemann2023multilevel, dutordoir2022neural, kerrigan2022} provides results going beyond simple data-sets. Finally, we present some qualitative results in \Cref{fig:celeba_curated,fig:celeba_vit} clearly showing that the proposed methodology is capable of producing diverse and detailed images.

\begin{table}[ht]
\centering
\begin{tabular}{|c|c|c|c|}
\hline
\multicolumn{1}{|c|}{\textbf{Methods}} & \multicolumn{1}{c|}{\textbf{FID} ($\downarrow$)} & \multicolumn{1}{c|}{\textbf{FID-CLIP} ($\downarrow$)} & \multicolumn{1}{c|}{\textbf{Params}} \\
\hline
\hline
\textbf{\gls{FDP}(MLP)} & {35.00} & {12.44} & $O$(1 M) \\
\hline
\textbf{\gls{FDP}(UViT)} & {11.00} & {6.55} & $O$(20 M) \\
\hline
\gls{FD2F} & 40.40 & - & $O$(10 M) \\
\hline
\gls{SBD} & 3.30 & - & $O$(100 M) \\
\hline
\gls{IDIFF} & - & 4.57 & $O$(100 M) \\
\hline
\end{tabular}
\caption{Quantitative results, \celeba data-set. (\fidclip \citep{kynkaanniemi2022role})}
\label{tab:results_celeba}
\end{table}

\begin{figure}[ht]
\centering
\begin{minipage}{.5\textwidth}
  \centering
  \includegraphics[width=0.9\linewidth]{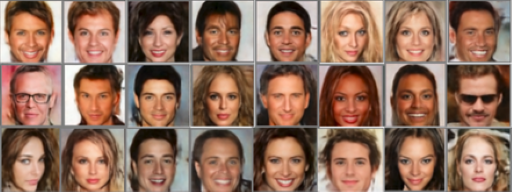}
  \captionof{figure}{Qualitative results with MLP.}
  \label{fig:celeba_curated}
\end{minipage}%
\hfill
\begin{minipage}{.5\textwidth}
  \centering
  \includegraphics[width=0.9\linewidth]{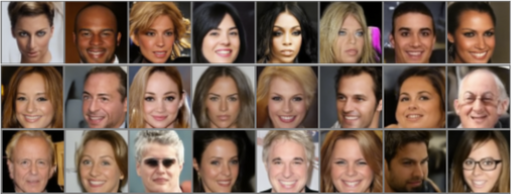}
  \captionof{figure}{Qualitative results with UViT.}
  \label{fig:celeba_vit}
\end{minipage}
\end{figure}

\section{Conclusion, Limitations and Broader Impact}\label{sec:conclusion}
We presented a theoretical framework to define functional diffusion processes for generative modeling. \glspl{FDP} generalize traditional score-based diffusion models to infinite-dimensional function spaces, and in this context we were the first to provide a full characterization of forward and backward dynamics, together with a formal derivation of an \gls{ELBO} that allowed the estimation of the parametric score function driving the reverse dynamics. 

To use \glspl{FDP} in practice, we carefully studied for which subset of functions it was possible to operate on a countable set of samples without losing information. We then proceeded to introduce a series of methods to jointly model -- using only a simple \gls{INR} or a Transformer -- an approximate functional representation of data on discrete grids, and an approximate score function. Additionally, we detailed practical training procedures of \glspl{FDP}, and integration schemes to generate new samples.

The implementation of \glspl{FDP} for generative modeling was simple. We validated the viability of \glspl{FDP} through a series of experiments on real images, where we show, while only using a simple MLP for learning the score network, extremely promising results in terms of generation quality.

Like other works in the literature, the proposed method can have both positive (e.g., synthesizing new data automatically) and negative (e.g., deep fakes) impacts on society depending on the application.

\section{Acknowledgments}
GF gratefully acknowledges support from Huawei Paris and the European Commission (ADROIT6G
Grant agreement ID: 101095363).
MF gratefully acknowledges support from the AXA Research Fund and  the Agence Nationale de la Recherche (grant ANR-18-CE46-0002 and  ANR-19-P3IA-0002).

\newpage

\bibliography{biblio}
\bibliographystyle{iclr2022_conference}

\newpage

\appendix
\section*{Supplementary Material: Continuous-Time Functional Diffusion Processes}
\section{Reverse Functional Diffusion Processes}\label{revtimeproofs}
In this Section, we review the mathematical details to obtain the backward \gls{FDP} discussed in \Cref{theorevsde}. 
Depending on the considered class of noise, different approaches are needed. First, we present in \Cref{sec:follmer} the conditions to ensure existence of the backward process , which we use if the $C$ operator is an identity matrix, $C=I$. Then we move to a different approach in \Cref{sec:miller} for the case $C\neq I$.

\subsection{Follmer Formulation}\label{sec:follmer}
The work in \cite{follmer1986time} is based on a finite entropy condition, which we report here as \Cref{cond:finentr}.
One simple way to ensure that the condition is satisfied is to assume: 

\begin{condition}\label{cond:finentr}
For a given $k$, define $\qL_{(k)}$ to be the path measure corresponding to the (infinite) system 
\begin{equation}
    \begin{cases}
    \dd X^{i}_t=b^i (X_t,t)\dd t+\dd W^i_t,\quad i\neq k\\
    \dd X^{i}_t=\dd W^k_t,\quad i=k.
    \end{cases}
\end{equation}
We say that $\qL$ satisfies the \textit{finite local entropy condition} if $\KL{\qL}{\qL_{(k)}}<\infty,\forall k$.
\end{condition}

Define $\mathcal{F}_t^{(i)}=\sigma(X_0^i,X_s^j,0\leq s\leq t,j\neq i)$. 
\begin{assumption}\label{assFol}
\begin{equation}
    \int_0^T b^i(X_t,t)^2\dd t+\sum_{j\neq i}\E[\int_0^T\left(b^j(X_t,t)-\E\left[b^j(X_t,t)\g\mathcal{F}_t^{(i)}\right]\right)^2 \dd t]<\infty, \qL_{(i)} a.s.
\end{equation}

\end{assumption}

Notice that if \Cref{assFol} is true, then \Cref{cond:finentr} holds (\cite{follmer1986time}, Thm. 2.23)

\begin{theorem}
If $\KL{\qL}{\qL_{(k)}}<\infty$, then $\KL{\hat\qL}{\hat\qL_{(k)}}<\infty$.
\end{theorem}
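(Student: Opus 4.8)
The plan is to exploit the fact that relative entropy cannot increase under the application of a measurable map, together with the observation that time reversal, $\omega \mapsto \hat\omega$ where $\hat\omega(t) = \omega(T-t)$, is a bi-measurable bijection on path space $\Omega = C^1([0,T])$. First I would set up notation: let $\Phi:\Omega\to\Omega$ denote the time-reversal operator, which is its own inverse and is measurable (continuous, in fact) with respect to the canonical $\sigma$-algebra $\mathcal{F}$. By the definition of the path-reversed measures, $\hat\qL = \Phi_{\#}\qL$ (the pushforward of $\qL$ under $\Phi$), and likewise $\hat\qL_{(k)} = \Phi_{\#}\qL_{(k)}$, so the pair $(\hat\qL,\hat\qL_{(k)})$ is the pushforward of the pair $(\qL,\qL_{(k)})$ under the same map.

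The key step is then the data-processing inequality for the Kullback--Leibler divergence: for any measurable map $\Phi$ and any two measures $\mu,\nu$ one has $\KL{\Phi_{\#}\mu}{\Phi_{\#}\nu}\le \KL{\mu}{\nu}$. Applying this with $\mu=\qL$, $\nu=\qL_{(k)}$, $\Phi$ the time reversal gives $\KL{\hat\qL}{\hat\qL_{(k)}}\le\KL{\qL}{\qL_{(k)}}<\infty$, which is exactly the claim. (One could even get equality by applying the inequality in both directions using $\Phi^{-1}=\Phi$, but finiteness is all that is needed here.) I would state the data-processing inequality as a standard lemma — it follows immediately from the variational (Donsker--Varadhan) characterization of relative entropy, or from Jensen's inequality applied to the Radon--Nikodym derivative conditioned on the sub-$\sigma$-algebra $\Phi^{-1}(\mathcal{F})$ — and cite it rather than reprove it.

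The one point that genuinely requires care, and which I expect to be the main obstacle, is verifying that $\hat\qL_{(k)}$ really is the time-reversal of $\qL_{(k)}$ in the sense needed, i.e. that the ``frozen $k$-th coordinate'' modification commutes appropriately with path reversal so that the same map $\Phi$ transports both measures simultaneously. Since $\qL_{(k)}$ is defined in \Cref{cond:finentr} purely as a path measure on the same space $\Omega$ (the system is just \Cref{eq:infty_fsde} with the $k$-th drift set to zero), its reversal $\hat\qL_{(k)}$ is by definition $\Phi_{\#}\qL_{(k)}$, so this is really a matter of unwinding definitions rather than proving new dynamics. Once that identification is made explicit, the absolute continuity $\hat\qL\ll\hat\qL_{(k)}$ and the bound on the divergence are immediate from the contraction property, and the proof is complete.
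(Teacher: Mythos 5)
The crux of your argument is the identification $\hat\qL_{(k)}=\Phi_{\#}\qL_{(k)}$, and this is exactly where the proof breaks down. You correctly flag it as ``the one point that genuinely requires care,'' but then dismiss it as a matter of unwinding definitions; it is not. For the theorem to do its job --- the surrounding text states that its purpose is to show that the \emph{reverse} path measure $\hat\qL$ itself satisfies the finite local entropy condition of \Cref{cond:finentr}, so that the representation theorem (\Cref{theorevfol}) can then be applied to $\hat\qL$ --- the reference measure $\hat\qL_{(k)}$ must be the measure under which, \emph{in the reversed time direction}, the $k$-th coordinate is a driftless Wiener process while the remaining coordinates keep the conditional dynamics of $\hat\qL$. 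That is not the pushforward of $\qL_{(k)}$ under time reversal: under $\qL_{(k)}$ the $k$-th coordinate is a Brownian motion started from the $k$-th marginal of $\rho_0$, and the time reversal of such a process is, with respect to the reversed filtration, a diffusion with the nonzero drift $r^k\partial_k\log p_{T-t}$; being a driftless martingale is not a time-reversal-invariant property. Hence $\Phi_{\#}\qL_{(k)}\neq\hat\qL_{(k)}$, and your data-processing step, while valid, bounds the wrong divergence. Indeed, under your reading the statement would be a trivial equality (as you yourself note), and it would say nothing about the local entropy condition for $\hat\qL$.

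The actual content of the theorem (Lemma 3.6 of F\"ollmer--Wakolbinger, which the paper's proof adapts) is genuinely quantitative. By Girsanov, $\KL{\hat\qL}{\hat\qL_{(k)}}$ is essentially the drift energy $\tfrac12\,\E_{\hat\qL}\!\int_0^T (r^k)^{-1}(\hat b^k)^2\,\dd t$ of the reversed $k$-th coordinate, and since $\hat b^k=-b^k+r^k\,\partial_k\log\rho^{(d)}$, finiteness requires controlling the integrated Fisher-information-type term $\E\!\int_0^T r^k\bigl(\partial_k\log\rho^{(d)}_t\bigr)^2\,\dd t$. Showing that this is finite whenever the forward local entropy $\tfrac12\,\E_{\qL}\!\int_0^T(r^k)^{-1}(b^k)^2\,\dd t$ is finite is the real work; it does not follow from a contraction or invariance argument. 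To repair the proof you would need to either reproduce that estimate or invoke the cited lemma directly, as the paper does.
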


\begin{proof}
 The proof can be obtained by adapting the result of Lemma 3.6 of \cite{FOLLMER198659}.
\end{proof}

This Theorem states that if the forward \gls{FDP} path measure $\qL$ satisfies the finite local entropy condition, then also the reverse \gls{FDP} path measure $\hat\qL$ satisfies the finite local entropy condition.

\begin{theorem}\label{theorevfol}
Let $\qL$ be a finite entropy measure. Then:
\begin{equation}
    \begin{cases}
    \dd X^{k}_t=b^k (X_t,t)\dd t+ \dd W^k_t,\quad \text{under}\quad \qL\\
    \dd \hat{X}^{k}_t=\hat{b}^k (\hat{X}_t,t)\dd t+ \dd\hat{W}^k_t,\quad \text{under}\quad\hat \qL\\
    \end{cases}
\end{equation}
where:
\begin{equation}
    \frac{\partial\log(\rho^{(d)}_t(x^k\g x^j,j\neq k))}{\partial x^k}=\hat{b}^k (x,T-t)+{b}^k (x,t)
\end{equation}
\end{theorem}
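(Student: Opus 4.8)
The statement is a time-reversal formula for the infinite system of SDEs in \Cref{eq:infty_fsde}, expressing the reverse drift $\hat{b}^k$ in terms of the forward drift $b^k$ and the logarithmic derivative of the single-coordinate conditional density $\rho^{(d)}_t(x^k\g x^j, j\neq k)$. The plan is to reduce the claim to a finite-dimensional time-reversal result for each coordinate $k$, conditioning on the remaining coordinates, and then invoke the classical Haussmann--Pardoux / Anderson reversal theorem in that conditioned setting. First I would fix an index $k$ and observe that, under the finite local entropy hypothesis, the pair $(X^k_t, (X^j_t)_{j\neq k})$ has a well-defined joint law under $\qL$; the coordinate $X^k$ evolves as a one-dimensional diffusion driven by $W^k$ with drift $b^k(X_t,t)$, while the other coordinates enter only as a (random) time-dependent environment. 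Conditioning on the whole path of $(X^j)_{j\neq k}$ reduces the problem to the reversal of a one-dimensional SDE with additive noise, for which the reverse drift is $-b^k(x,T-t) + r^k\,\partial_{x^k}\log p_t(\cdot)$ evaluated at the appropriate conditional density.

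The key steps, in order, are: (i) show that the conditional density $\rho^{(d)}_t(x^k \g x^{j\neq k})$ exists and is sufficiently regular (this is where the finite local entropy condition of \Cref{cond:finentr}, guaranteed by \Cref{assFol}, does its work — it is exactly the integrability one needs for the reversed process to be a semimartingale with the claimed drift, cf.\ Lemma~3.6 of \cite{FOLLMER198659}); (ii) apply the finite-dimensional reversal identity to coordinate $k$ in the conditioned environment, yielding $\hat{b}^k(x, T-t) = -b^k(x,t) + \partial_{x^k}\log\rho^{(d)}_t(x^k\g x^{j\neq k})$ up to the normalization $r^k$ — note that in the theorem's normalization the noise coefficient is folded in, so one should be careful to state whether $r^k$ appears; as written, the statement has implicitly absorbed $r^k$ or assumed $R=I$ on these coordinates, matching \Cref{assFol}'s setting; (iii) rearrange to obtain the displayed identity $\partial_{x^k}\log\rho^{(d)}_t(x^k\g x^{j\neq k}) = \hat{b}^k(x,T-t) + b^k(x,t)$; (iv) check that assembling the countably many one-dimensional reversals back into a single system is consistent, i.e.\ that the reversed coordinate processes $\hat{X}^k$ are jointly a diffusion under $\hat\qL$ with the product-of-marginals driving noise — this is precisely what the previous theorem in \Cref{sec:follmer} ($\KL{\hat\qL}{\hat\qL_{(k)}}<\infty$ for all $k$) buys us, since it transfers the finite local entropy structure to the reverse measure and hence legitimizes the coordinatewise reversal.

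The main obstacle I expect is step (i) together with step (iv): making rigorous the conditioning on an infinite collection of coordinate paths and verifying that the coordinatewise reverse drifts genuinely patch together into the reverse of the \emph{joint} process, rather than merely reversing each marginal in isolation. In finite dimensions this is automatic; in infinite dimensions one must control cross-terms, which is exactly why \Cref{assFol} includes the sum $\sum_{j\neq i}\E[\int_0^T (b^j - \E[b^j \g \mathcal{F}^{(i)}_t])^2\,dt]$ — this term ensures that the information the other coordinates carry about $X^k$ is square-integrable, so the conditional drift is well-behaved. I would therefore lean on \cite{follmer1986time} (Thm.\ 2.23 and the surrounding reversal theory) for the heavy analytic lifting, and present the proof as: invoke finite local entropy $\Rightarrow$ conditional densities exist and are regular $\Rightarrow$ apply Föllmer's infinite-dimensional reversal theorem $\Rightarrow$ read off the drift relation coordinate by coordinate. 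The computation in step (iii) is then a one-line rearrangement.
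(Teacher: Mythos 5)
Your proposal lands on essentially the same proof as the paper: the paper's entire argument is a citation to Theorem~3.14 of \cite{FOLLMER198659}, i.e., it invokes F\"ollmer's infinite-dimensional time-reversal theorem under the finite local entropy condition and reads off the coordinatewise drift relation, exactly as your final step does. Your additional commentary on the $r^k$ normalization (absorbed since \Cref{assFol} is the $R=I$ setting) and on why the entropy condition controls the cross-coordinate terms is a correct gloss on what that cited theorem provides, but it is the same route, not a different one.
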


\begin{proof}
 For the proof, we refer to  Theorem 3.14 of \cite{FOLLMER198659}.
\end{proof}

\subsection{Millet Formulation}\label{sec:miller}

Let $L^2(R)=\{x\in H: \sum r^{i}(x^i)^2<\infty \}$.
For simplicity, we overload the notation of the letter $K$, and use it for generic constants, that might be different on a case by case basis.
\begin{assumption}\label{assH1}
\begin{flalign*}
    &\forall x\in L^2(R), \sup_t\{\sum r^{i}(b^{i}(x,t))^2\}+\sum (r^i)^2\leq K(1+\sum r^{i}(x^i)^2)\\
    &\forall x,y\in L^2(R), \sup_t\{\sum r^{i}(b^{i}(x,t)-b^{i}(y,t))^2\}\leq K \sum r^{i}(x^i-y^i)^2
\end{flalign*}
\end{assumption}

This assumption is simply the translation of H1 from \cite{millet1989time} to our notation.

\begin{assumption}\label{assH5}
There exists an increasing sequence of finite subsets $J(n),n\in\mathrm{N},\cup_n J(n)=\mathrm{N}$ such that $\forall n\in\mathrm{N},M>0$ there exists a constant $K(M,n)$  such that the following holds:
\begin{flalign*}
    &\sup_t\left(\sup_{i\in J(n)}\left(\left(\sup_x |b^{i}(x,t)|: \sup_{j\in J(n)}|x^j|\leq M\right)+\sum_j r^{j}\right)\right)\leq K(M,n).
\end{flalign*}
\end{assumption}
Again, this assumption is simply the translation of H5 from \cite{millet1989time} to our notation.

\begin{assumption}\label{ass_mix}
    Either i):
    \begin{flalign*}
        &\forall x,y\in L^2(R), \sup_t\{\sum r^{i}(b^{i}(x,t)-b^{i}(y,t))^2\}\leq K \sum (r^{i})^2(x^i-y^i)^2,
    \end{flalign*}
    or ii): $\forall i,b^i(x)$ is a function of $x$ for at most $M$ coordinates and 
    \begin{flalign*}
        & \forall x,y\in L^2(R), \sup_t\{\sum (r^{i})^2(b^{i}(x,t)-b^{i}(y,t))^2\}\leq K \sum (r^{i})^2(x^i-y^i)^2.
    \end{flalign*}

\end{assumption}
This corresponds to satisfying either H3 or jointly H2 and H4 of \cite{millet1989time}. For simplicity, we can combine together the different assumptions into 
\begin{assumption}\label{joint_millet}
    Let \cref{assH1}, \cref{assH5}, and \cref{ass_mix} hold. 
\end{assumption}
Finally, we state required assumptions about the density: 
\begin{assumption}\label{ass_millet}
Suppose that the initial condition is $X_0\in L^2(R)$.
\begin{itemize}
    \item Assume that the conditional law of $x^i$ given $x^j,j\neq i$ has density $\rho^{(d)}_t(x^i\g x^j,j\neq i)$ w.r.t Lebesgue measure on $\mathbb{R}$.
    \item Assume that $\int_{t_0}^1 \int_{D_J} |r^i\frac{\dd}{\dd x^{i}}(\rho^{(d)}_t(x^i\g x^j,j\neq i))|\dd x^i\rho_t(\dd x^{j\neq i})\dd t<\infty$, for fixed subset $J\subset\mathrm{N}$,$t_0>0$ and $D_J=\{(\prod_{j\in J}K_j)\times(\prod_{j\notin J}\mathbb{R}), K_j\text{ compact in }\mathbb{R}\}\cap L^2(R)$.
\end{itemize}    
\end{assumption}
We reported in our notation the content of Theorem 4.3 of \cite{millet1989time}. This can be used to prove the existence of the backward process.

\subsection{Proof of \Cref{theorevsde}}

If $R=I$, then we assume \Cref{assFol}. Consequently, $\qL$ is a finite entropy measure. Then \Cref{theorevfol} holds, from which the desired result. If, instead $R\neq I$, then we require \Cref{joint_millet},\Cref{ass_millet}. Application of Thm 4.3 of \cite{millet1989time} allows to prove the validity of \Cref{theorevsde} also in this case.
\subsubsection{Proof of \Cref{rev_corollary}}
\Cref{joint_millet} is required directly. We need to show that with the considered restrictions \Cref{ass_millet} is valid.

Since $\sum_i r^i<\infty$, then $\sum_i (r^i)^2=K_a<\infty$. Moreover, $(b^i(x^i,t))^2<K_b^2 (x^i)^2$. Then, $\forall x\in L^2(R)$, the following holds $\sup_t\{\sum r^{i}(b^{i}(x,t))^2\}+\sum (r^i)^2\leq 
    \sum r^{i}K_b^2(x^i)^2+K_a\leq \max(K_a,K_b^2)\left(1+\sum r^{i}(x^i)^2\right)
$. Similarly, $\forall x,y\in L^2(R)$ we have $\sup_t\{\sum r^{i}(b^{i}(x,t)-b^{i}(y,t))^2\}\leq   \sum r^{i}K_b^2(x^i-y^i)^2$. Thus \Cref{assH1} is satisfied.

Since $b^i(x,t)$ is bounded and independent on $t$, \Cref{assH5} is satisfied, as explicitly discussed in \cite{millet1989time}.

Finally, since $b^i(x)$ is a function of $x$ for $M=1$ coordinate, and $\sup_t\{\sum (r^{i})^2(b^{i}(x,t)-b^{i}(y,t))^2\}\leq   \sum (r^{i})^2K_b^2(x^i-y^i)^2$, \Cref{ass_mix} is satisfied.

Then, combined toghether \Cref{joint_millet} holds.

\subsection{Girsanov Regularity}\label{sec:girsanov_regularity}
\begin{condition}\label{girscond} 
Assume that $\gamma_{\mbtheta}(x,t)$ is an $\hat{\mathcal{F}}$ measurable process and that either:
\begin{equation}
    \E_{\hat\qL}\left[\exp\left(\frac{1}{2}\int\limits_0^T \norm{\gamma_{\mbtheta}(\hat{X}_t,t)}_{R^{\frac{1}{2}}H}^2 \dd t\right)\right]=\E_{\qL}\left[\exp\left(\frac{1}{2}\int\limits_0^T \norm{\gamma_{\mbtheta}({X}_t,t)}^2_{R^{\frac{1}{2}}H} \dd t\right)\right]< \infty,
\end{equation}
or
\begin{equation}
    \exists \delta>0: \E_{\hat\qL}\left[\exp\left(\frac{1}{2} \norm{\gamma_{\mbtheta}(\hat{X}_\delta,\delta)}_{R^{\frac{1}{2}}H} \dd t\right)\right]< \infty.
\end{equation}
\end{condition}

This is equivalent to the regularity condition in eq. 10.23 of \cite{da2014stochastic} or Proposition 10.17 in \cite{da2014stochastic}.

\subsection{Proof of KL divergence expression}\label{klproof}
We leverage \Cref{eq:radon-niko} to  express the Kullback-Leibler divergence as:
\begin{flalign*}
&\KL{\hat\qL}{\hat\pL^{\chi_T}}=\E_{\hat\qL}\left[\log\frac{\dd\hat\qL_0}{\dd\hat\pL_0}+\log\frac{\dd \rho_T }{\dd \chi_T}\right]=\E_{\hat\qL}\left[\log\frac{\dd\hat\qL_0}{\dd\hat\pL_0}\right]+\KL{\rho_T}{\chi_T}=\\
&\E_{\hat\qL}\left[-\int\limits_0^T\langle\gamma_{\mbtheta}(\hat{X}_t,t),\dd \hat{W}_t\rangle_{R^{\frac{1}{2}}H}+\frac{1}{2}\int\limits_0^T \norm{\gamma_{\mbtheta}(\hat{X}_t,t)}_{R^{\frac{1}{2}}H}^2 \dd t\right]+\KL{\rho_T}{\chi_T}=\\&\frac{1}{2}\E_{\hat\qL}\left[\int\limits_0^T \norm{\gamma_{\mbtheta}(\hat X_t,t)}_{R^{\frac{1}{2}}H}^2 \dd t\right]+\KL{\rho_T}{\chi_T}=\frac{1}{2}\E_{\qL}\left[\int\limits_0^T \norm{\gamma_{\mbtheta}(X_t,t)}_{R^{\frac{1}{2}}H}^2 \dd t\right]+\KL{\rho_T}{\chi_T}.\end{flalign*}
Moreover, since
\begin{flalign*}
&\KL{\hat\qL}{\hat\pL^{\chi_T}}=\E_{\qL}\left[\log\frac{\dd\hat\qL_T}{\dd\hat\pL^{\chi_T}_T}+\log\frac{\dd \rho_0 }{\dd \chi_0}\right]\geq \KL{\rho_0}{\chi_0},
\end{flalign*}
we can combine the two results and obtain \Cref{eq:elbokl}
\subsection{Conditional score matching}
In this subsection we prove the equality in \Cref{score_condmatching}:
\begin{flalign*}
  &\E_{\qL}\left[\int\limits_0^T \norm{\gamma_{\mbtheta}(X_t,t)}_{R^{\frac{1}{2}}H}^2 \dd t\right]=\int\limits_0^T\int_H \norm{\gamma_{\mbtheta}(x,t)}_{R^{\frac{1}{2}}H}^2 \dd t\dd \rho_t(x)=\\
  &\int\limits_0^T\int_H \norm{D_x\log\rho^{(d)}_{T-t}(x)-s_{\mbtheta}(x,T-t)}_{R^{\frac{1}{2}}H}^2 \dd t\dd \rho_t(x)=\\
  &\int\limits_0^T\int_{H\times H} \norm{D_x\log\rho^{(d)}_{t}(x)-s_{\mbtheta}(x,t)}_{R^{\frac{1}{2}}H}^2 \dd t\dd \rho_t(x,x_0)=\\
  &\int\limits_0^T\int_{H\times H} \norm{D_x\log\rho^{(d)}_{t}(x)-D_x\log\rho^{(d)}_{t}(x\g x_0)+D_x\log\rho^{(d)}_{t}(x\g x_0)-s_{\mbtheta}(x,t)}_{R^{\frac{1}{2}}H}^2 \dd t\dd \rho_t(x,x_0)=\\
  &\int\limits_0^T\int_{H\times H} \norm{D_x\log\rho^{(d)}_{t}(x)-D_x\log\rho^{(d)}_{t}(x\g x_0)}_{R^{\frac{1}{2}}H}^2+\norm{D_x\log\rho^{(d)}_{t}(x\g x_0)-s_{\mbtheta}(x,t)}_{R^{\frac{1}{2}}H}^2 +\\
  & 2\left\langle D_x\log\rho^{(d)}_{t}(x)-D_x\log\rho^{(d)}_{t}(x\g x_0), D_x\log\rho^{(d)}_{t}(x\g x_0)-s_{\mbtheta}(x,t)\right\rangle\dd t\dd \rho_t(x,x_0).
\end{flalign*}

To simplify the equality, we need to notice that:

\begin{flalign*}
    &\rho^{(d)}_t(x^i|x^{j\neq i})\dd x^i=\dd \rho_t(x^i|x^{j\neq i})=\int_{x_0}\dd \rho_t(x_0|x)\dd \rho_t(x^i|x^{j\neq i})=\int_{x_0}\dd \rho_t(x^i,x_0|x^{j\neq i})=\\&\int_{x_0}\dd \rho_t(x^i|x_0,x^{j\neq i})\dd \rho_t(x_0|x^{j\neq i})=\dd x^i\int_{x_0}\rho_t^{(d)}(x^i|x_0,x^{j\neq i})\dd \rho_t(x_0|x^{j\neq i}).
\end{flalign*}

Then, computing 
\begin{flalign*}
    &\int_{x_0}\frac{\dd}{\dd x^i}\log\rho^{(d)}(x^i|x^{j\neq i},x_0)\dd\rho_t(x,x_0)=\int_{x_0}\frac{\frac{\dd}{\dd x^i}\rho^{(d)}(x^i|x^{j\neq i},x_0)}{\rho^{(d)}(x^i|x^{j\neq i},x_0)}\dd\rho_t(x,x_0)=\\&
    \int_{x_0}\frac{\frac{\dd}{\dd x^i}\rho^{(d)}(x^i|x^{j\neq i},x_0)}{\rho^{(d)}(x^i|x^{j\neq i},x_0)}\dd\rho_t(x^i|x^{j\neq i},x_0)\dd \rho_t(x_0,x^{j\neq i})=\int_{x_0}\frac{\dd}{\dd x^i}\rho^{(d)}(x^i|x^{j\neq i},x_0)\dd x^i\dd \rho_t(x_0,x^{j\neq i})=\\
    &\int_{x_0}\frac{\dd}{\dd x^i}\rho^{(d)}(x^i|x^{j\neq i},x_0)\dd x^i\dd \rho_t(x_0|x^{j\neq i})\dd \rho_t(x^{j\neq i})=\frac{\dd}{\dd x^i}\left(\int_{x_0}\rho^{(d)}(x^i|x^{j\neq i},x_0)\dd \rho_t(x_0|x^{j\neq i})\right)\dd x^i\dd \rho_t(x^{j\neq i})=\\
    &\frac{\dd}{\dd x^i} \rho^{(d)}_t(x^i|x^{j\neq i})\dd x^i\dd \rho_t(x^{j\neq i})=\frac{\dd \log \rho^{(d)}_t(x^i|x^{j\neq i})}{\dd x^i} \rho^{(d)}_t(x^i|x^{j\neq i})\dd x^i\dd \rho_t(x^{j\neq i})=\frac{\dd \log \rho^{(d)}_t(x^i|x^{j\neq i})}{\dd x^i} \dd \rho_t(x)
\end{flalign*}

Consequently:

\begin{flalign*}
    &\int_{H\times H}\left\langle D_x\log\rho^{(d)}_{t}(x)-D_x\log\rho^{(d)}_{t}(x\g x_0), s_{\mbtheta}(x,t)\right\rangle\dd \rho_t(x,x_0)=0.
\end{flalign*}

Combining together and rearranging the terms, we get the desired \Cref{score_condmatching}.

\subsection{Explicit expression of score function}\label{sec:scoredenoiser}
As mentioned in the text, we consider the case $f=0$. In this case, there exists a weak solution to \Cref{eq:diffsde} as:
\begin{equation}
    X_t=\exp(t\mathcal{A})X_0+\int\limits_0^t \exp((t-s)\mathcal{A})\dd W_s. 
\end{equation}
Consequently, the true score function has expression:

\begin{flalign*}
    &\frac{\dd}{\dd x^i}\log \rho_t^{(d)}(x^i|x^{j\neq i})=\frac{\frac{\dd}{\dd x^i}\rho_t^{(d)}(x^i|x^{j\neq i})}{\rho_t^{(d)}(x^i|x^{j\neq i})}=\frac{\frac{\dd}{\dd x^i}\int_{x_0}\rho_t^{(d)}(x^i|x_0,x^{j\neq i})\dd \rho_t(x_0|x^{j\neq i})}{\rho_t^{(d)}(x^i|x^{j\neq i})}=\\&\frac{-\int_{x_0}(s^i)^{-1}\left(x^i-\exp(tb^i)x_0^i\right)\rho_t^{(d)}(x^i|x_0,x^{j\neq i})\dd \rho_t(x_0|x^{j\neq i})}{\rho_t^{(d)}(x^i|x^{j\neq i})}=\\&\frac{-(s^i)^{-1}\left(x^i \rho_t^{(d)}(x^i|x^{j\neq i})-\int_{x_0}\exp(tb^i)x_0^i\rho_t^{(d)}(x^i|x_0,x^{j\neq i})\dd \rho_t(x_0|x^{j\neq i})\right)}{\rho_t^{(d)}(x^i|x^{j\neq i})}=\\&
    \frac{-(s^i)^{-1}\left(x^i \rho_t^{(d)}(x^i|x^{j\neq i})-\int_{x_0}\exp(tb^i)x_0^i\rho_t^{(d)}(x^i|x_0,x^{j\neq i})\dd \rho_t(x_0|x^{j\neq i})\right)}{\rho_t^{(d)}(x^i|x^{j\neq i})}=\\
    &\frac{-(s^i)^{-1}\left(x^i \rho_t^{(d)}(x^i|x^{j\neq i})-\int_{x_0^i}\exp(tb^i)x_0^i\rho_t^{(d)}(x^i|x_0^i,x^{j\neq i})\dd \rho_t(x_0^i|x^{j\neq i})\right)}{\rho_t^{(d)}(x^i|x^{j\neq i})}=\\&\frac{-(s^i)^{-1}\left(x^i \rho_t^{(d)}(x^i|x^{j\neq i})-\int_{x_0^i}\exp(tb^i)x_0^i\rho_t^{(d)}(x^i|x_0^i,x^{j\neq i}) \rho^{(d)}(x_0^i|x^{j\neq i})\dd x_0^i\right)}{\rho_t^{(d)}(x^i|x^{j\neq i})}=\\&\frac{-(s^i)^{-1}\left(x^i \rho_t^{(d)}(x^i|x^{j\neq i})-\int_{x_0^i}\exp(tb^i)x_0^i\rho_t^{(d)}(x^i,x_0^i|x^{j\neq i}) \dd x_0^i\right)}{\rho_t^{(d)}(x^i|x^{j\neq i})}=\\&\frac{-(s^i)^{-1}\left(x^i \rho_t^{(d)}(x^i|x^{j\neq i})-\int_{x_0^i}\exp(tb^i)x_0^i\rho_t^{(d)}(x_0^i|x) \dd x_0^i\right)\rho_t^{(d)}(x^i|x^{j\neq i}))}{\rho_t^{(d)}(x^i|x^{j\neq i})}=\\&-(s^i)^{-1}\left(x^i-\int_{x_0^i}\exp(tb^i)x_0^i\rho_t^{(d)}(x_0^i|x) \dd x_0^i\right)
\end{flalign*}

where $s^i=r^i\frac{\exp(2b^i t)-1}{2b^i}$. This is exactly the desired \Cref{scoreden}. Similar calculations allow to prove $D_x\log\rho^{(d)}_{t}(x\g x_0)= -\mathcal{S}(t)^{-1}\left(x-\exp(t\mathcal{A})x_0\right)
$.

\section{Fokker Planck equation}\label{sec:fp}

In this Section we discuss the infinite dimensional generalization of the classical Fokker Planck equation. 
We can associate to \cref{eq:diffsde} the differential operator: 
\begin{equation}
    \mathcal{L}_0u(x,t)=D_t u(x,t)+\underbrace{\frac{1}{2}\Tr{R D_x^2u(x,t)}+\langle \mathcal{A}x+f(x,t),D_x u(x,t)\rangle}_{\mathcal{L}u(x,t)},\quad x\in H, t\in[0,T],
\end{equation}
where $D_t$ is the time derivative, $D_x,D_x^2$ are first and second order Fr\'echet derivatives in space. The domain of the operator $\mathcal{L}_0$ is $D(\mathcal{L}_0)$, the linear span  of real parts of functions $u_{\phi,h}=\phi(t)\exp(i\langle x, h(t)\rangle),x\in H,t\in [0,T]$ where $\phi\in C^1([0,T]), \phi(T)=0$, $h\in C^1([0,T]; D(\mathcal{A}^\dagger))$, where $\dagger$ indicates adjoint. Provided appropriate conditions are satisfied, see for example \cite{bogachev,bogachev2011uniqueness},
the time varying measure $\rho_t(\dd x)\dd t$ exists, is unique, and solves the Fokker-Planck equation $\mathcal{L}_0^\dagger \rho_t=0$.

\section{Uncertainty principle}\label{sec:unc_example}
We here clarify that Hilbert spaces of square integrable functions that are not, in general, simultaneously homogeneous and separable. For example, while $\mathbb{R}$ is homogeneous, the set of square integrable functions over $\mathbb{R}$ is not separable, since the basis is the \textit{uncountable} set $\cos(2\pi \nu p),\sin(2\pi \nu p), \nu \in \mathbb{R}$. Then, \gls{FDP} requirements are not met, as we need a countable basis. Moreover, we would need in general an infinite number of samples (grid over the whole $\mathbb{R}$) to reconstruct the functions.
Conversely, a set like the interval $I=[0,1]\subset\mathbb{R}$ has \textit{countable} basis $\cos(2\pi t p),\sin(2\pi t p), t \in \mathbb{Z}$ (and thus is separable) and, considering $x$ to be band-limited, a sampling grid with finite cardinality would allow to reconstruct of the function. However, $I$ is not homogeneous as no isometry group exists. Consequently, \Cref{sampling_theo} is not applicable. To fix the issue, one could naively think of extending any function defined over $I$ to the whole $\mathbb{R}$ by considering $\bar{x}[p]=x[p],p\in I$ and  $\bar{x}[p]=0, p\notin I$. Obviously, if $x\in L_2(I)$ then $\bar{x}\in L_2(\mathbb{R})$. However, since $\bar{x}$ has finite support, it cannot be bandlimited, making such an approach not a viable solution. In classical signal processing literature, the problem is usually referred to as the \textit{uncertainty principle} \citep{slepian1983some}. 
\section{A complete example}\label{sec:impl_example}

We present an example in which we cast \Cref{eq:fem} for square integrable functions over the interval $I=[0,1]$, $L^2(I)$. 
In this case, one natural selection for the basis is the Fourier basis\footnote{We stress that although we should consider a real Hilbert space, we select the complex exponential to avoid cluttering the notation. It is possible to select $\{\cos(2\pi p),\sin(2\pi p),\cos(2\pi 2p),\sin(2\pi 2p),\dots\}  $ as a basis, and redoing the calculations in this Section we can obtain a functionally equivalent scheme as the one with the real basis.} $\leg e^k\rig  =\{\dots, \exp(-j2\pi 2p),\exp(-j2\pi p),1,\exp(j2\pi p),\exp(j2\pi 2p),\dots\}  $. 
Assume the operator $\mathcal{A}$ to be a pseudo-differantial operator, such that $\langle \mathcal{A}x,e^k\rangle=b^k x^k$. Also, assume that $b^k,r^k$ are selected such that conditions of \Cref{rev_corollary} are met, and consequently the backward process exists. 
Since we are working with samples collected on the grid $\leg x\left[\nicefrac{i}{N}\right]\rig$ we first map the samples to the frequency domain, and then build a Fourier-like representation with a finite set of sinusoids. We then define the mapping $\fft(\leg z^i\rig  )^k\defeq \sum_{i=0}^{N-1} z^i\exp(-j2\pi k \frac{i}{N})$ and its inverse $\ifft(\leg z^i\rig  )^k\defeq N^{-1}\sum_{i=0}^{N-1} z^i\exp(j2\pi k \frac{i}{N})$. This suggests to consider the following expression for the interpolating functions:
\begin{equation*}
    \xi^i=\frac{1}{N}\sum_{k=0}^{N-1}e^k\exp(-j2\pi k \frac{i}{N})=\frac{1}{N}\sum_{k=0}^{N-1}\exp(j2\pi k (p-\frac{i}{N})).
\end{equation*}
Those functions are indeed nothing but a frequency truncated version of the $\text{sinc}$ function, which is the classical reconstruction function of the sampling theorem on 1-D signals. Moreover $\langle\xi^i,\zeta^k\rangle=\delta(i-k)$. We are now ready to show \textit{i)} the expression of the forward process, \textit{ii)} the expression of the parametric score function $s_\mbtheta$ and $\gamma_\mbtheta$, \textit{iii)} the computation of the \gls{ELBO} and finally \textit{iv)} the expression for the backward process. 

The forward process defined in \Cref{eq:fem} has expression:
\begin{flalign}\label{num_fw}
\dd {X}_t\left[\nicefrac{k}{N}\right]=\ifft\left(\leg b^l \fft(\leg X_t[\nicefrac{i}{N}]\rig  )^l\rig    \right)^k
\dd t+\dd {W}_t\left[\nicefrac{k}{N}\right],\quad k=1,\dots,|Z|,
\end{flalign}
where $\dd {W}_t\left[\nicefrac{k}{N}\right]\simeq \fft(\leg \dd W_t^i\rig )^k$. Simple calculations show that ${X}_t\left[\nicefrac{k}{N}\right]$ is equivalent in distribution to \begin{equation}\label{xt}
  {X}_t\left[\nicefrac{k}{N}\right]= \ifft\left( \leg \exp(b^l t) \fft(\leg X_0[\nicefrac{i}{N}] \rig  )^l +\sqrt{s^l} \epsilon^l \rig   \right)^k,
\end{equation}
where $s^l=\left\langle\mathcal{S}(t),e^l\right\rangle=r^l\frac{\exp(2b^l t)-1}{2b^l}$ and $\epsilon^l\sim \mathcal{N}(0,1)$, allowing simulation of the forward process in a single step.
The parametric score function can be approximated as:
\begin{flalign}\label{num_score}
   & s_{\mbtheta}\left(\sum_i X_t\left[\nicefrac{i}{N}\right]\xi^i,t\right)\left[\nicefrac{i}{N}\right]=\\ \nonumber
   & -\ifft\left( \leg  \frac{\fft\left(\leg {X}_t\left[\nicefrac{i}{N}\right]\rig  \right)^k-\exp(b^kt)\fft\left(\leg n(g(\leg X_t\left[\nicefrac{l}{N}\right]\rig  ),t,\mbtheta)\left[\nicefrac{l}{N}\right]\rig  \right)}{s^k} \rig  \right)^i.
\end{flalign}
Similarly:
\begin{flalign}\label{gamma_tilde}
   & \tilde\gamma_{\mbtheta}\left(\sum_i X_t\left[\nicefrac{i}{N}\right]\xi^i,\sum_i X_0\left[\nicefrac{i}{N}\right]\xi^i,t\right)\left[\nicefrac{i}{N}\right]= \\ \nonumber
   & -\ifft\left( \leg  \frac{\exp(b^kt)}{s^k}\left(\fft\left(\leg n(g(\leg X_t\left[\nicefrac{l}{N}\right]\rig  ),t,\mbtheta)\left[\nicefrac{l}{N}\right]\rig  -X_0[\nicefrac{l}{N}]\right)^k\right)\rig  \right)^i.
\end{flalign}

Combining \Cref{xt} and \Cref{gamma_tilde} we can fully characterize the training objective defined in \Cref{num_elbo}. Then, it is possible to optimize the value of the parameters $\mbtheta$ with any gradient-based optimizer.

Finally, the backward process approximation is expressed as:
\begin{flalign}\label{bw_num}
& \dd \hat{X}_t\left[\nicefrac{k}{N}\right]=-\ifft\left(\leg b^l \fft(\leg \hat X_t[\nicefrac{i}{N}]\rig  )^l\rig    \right)^k
+\ifft\left(r^l\fft\left(s_{\mbtheta}(\sum_i \hat X_t\left[\nicefrac{i}{N}\right]\xi^i,T-t)\left[\nicefrac{i}{N}\right]\right)^l\right)\dd t+\dd {W}_t\left[\nicefrac{k}{N}\right] \\ \nonumber
& k=1,\dots,|Z|,
\end{flalign}
from which new samples can be generated. 
\subsection{Proofs}\label{sec:implementation_details}
We start by proving \Cref{num_fw}. Starting from the drift term of \Cref{eq:fem}, we have the following chain of equalities:
\begin{flalign*}
    &\left\langle \mathcal{A}\sum_{i=0}^{N-1} X_t[\nicefrac{i}{N}]\xi^i ,\zeta^{k}\right\rangle=\left\langle \sum_{i=0}^{N-1} X_t[\nicefrac{i}{N}]\mathcal{A}  \frac{1}{N}\sum_{l=0}^{N-1}e^l\exp(-j2\pi l \frac{i}{N}),\zeta^{k}\right\rangle=\\
    &\left\langle \sum_{i=0}^{N-1} X_t[\nicefrac{i}{N}]\frac{1}{N}\sum_{l=0}^{N-1}b^le^l\exp(-j2\pi l \frac{i}{N}),\zeta^{k}\right\rangle=\\
    &\sum_{i=0}^{N-1} X_t[\nicefrac{i}{N}]\frac{1}{N}\sum_{l=0}^{N-1}b^l\exp(j2\pi l \nicefrac{k}{N}) \exp(-j2\pi l \nicefrac{i}{N})=\\
    &\sum_{l=0}^{N-1}b^l\exp(j2\pi l \nicefrac{k}{N})\leg \fft(\leg X_t[\nicefrac{i}{N}]\rig  )\rig  ^l=\\
    &\ifft\left(b^l \leg \fft(\leg X_t[\nicefrac{i}{N}]\rig  )\rig  ^l  \right)^i.
\end{flalign*}
The noise term $\dd {W}_t\left[\nicefrac{k}{N}\right]$ is approximated as:
\begin{flalign*}
\dd {W}_t\left[\nicefrac{k}{N}\right]=\langle\dd W_t,\zeta^k\rangle=\langle\sum_{i=0}^{\infty}\dd W_t^i e^i,\zeta^k\rangle=\sum_{i=0}^{\infty}\dd W_t^i \exp(j2\pi i\frac{k}{N})\simeq \fft(\leg \dd W_t^i\rig)^k, 
\end{flalign*}
where we are truncating the sum.
The score term has expression:
\begin{flalign*}
   & s_{\mbtheta}(\sum_i X_t\left[\nicefrac{i}{N}\right]\xi^i,t)=-( \mathcal{S}(t))^{-1}\left(\sum_i X_t\left[\nicefrac{i}{N}\right]\xi^i-\exp(t\mathcal{A})n(g(\leg X_t\left[\nicefrac{i}{N}\right]\rig  ),t,\mbtheta)\right)=\\
   &-\sum_k  \frac{\overbrace{\sum_i X_t\left[\nicefrac{i}{N}\right]\left\langle\xi^i,(e^k)^{\dagger}\right\rangle}^{=\fft\left(\leg {X}_t\left[\nicefrac{i}{N}\right]\rig  \right)\defeq C_t^k} -\exp(b^kt)\left\langle n(g(\leg X_t\left[\nicefrac{i}{N}\right]\rig  ),t,\mbtheta),(e^k)^{\dagger}\right\rangle}{s^k}e^k=\\
   &-\sum_k  \frac{ C_t^k-\exp(b^kt)\left\langle n(g(\leg X_t\left[\nicefrac{i}{N}\right]\rig  ),t,\mbtheta),\exp(-j2\pi k p)\right\rangle}{s^k}e^k\simeq\\
   &-\sum_k  \frac{ C_t^k-\exp(b^kt)\left(
   N^{-1}\sum_r n(g(\leg X_t\left[\nicefrac{i}{N}\right]\rig  ),t,\mbtheta)\left[\frac{r}{N}\right],\exp(-j2\pi k \frac{r}{N})\right)
   }{s^k}e^k,
\end{flalign*}

where the approximation is due to the substitution of explicit scalar product with the discretized version trough $\fft$. When evaluated on the grid of interest:
\begin{flalign*}
   & s_{\mbtheta}\left(\sum_i X_t\left[\nicefrac{i}{N}\right]\xi^i,t\right)\left[\nicefrac{i}{N}\right]=\\
   &-\sum_k  \frac{\left( C_t^k-\exp(b^kt)\left(
   N^{-1}\sum_r n(g(\leg X_t\left[\nicefrac{i}{N}\right]\rig  ),t,\mbtheta)\left[\frac{r}{N}\right],\exp(-j2\pi k \frac{r}{N})\right)
   \right)}{s^k}\exp(j2\pi k\nicefrac{i}{N})=\\
   &-\ifft\left( \leg  \frac{\fft\left(\leg {X}_t\left[\nicefrac{i}{N}\right]\rig  \right)-\exp(b^kt)\fft\left(\leg n(g(\leg X_t\left[\nicefrac{i}{N}\right]\rig  ),t,\mbtheta)\left[\nicefrac{i}{N}\right]\rig  \right)}{s^k} \rig  \right).
\end{flalign*}

The value of $\tilde\gamma_{\mbtheta}$, \Cref{gamma_tilde} and the expression of the backward process, \Cref{bw_num}, are obtained similarly, considering the above results.

\section{Implementation Details and Additional Experiments}\label{sec:add_exp}
In all experiments we use the the complex Fourier basis for the Hilbert spaces, indexed by $k$. This extends to the 2-dimensional case what we described in \Cref{sec:implementation_details}. As stated in the main paper, our practical implementation sets $f=0$: then, we only need to specify the value for the parameters $b^k, r^k$. In our implementation we consider an extended class of \glspl{SDE} that include time-varying multiplying coefficients in front of the drift and diffusion terms, as done for example in the Variance Preserving \gls{SDE} originally described by \cite{song2020}. This can be simply interpreted as the time-rescaled version of autonomous \glspl{SDE}. 

\subsection{Architectural details}
\noindent \textbf{\gls{INR}-based score network.}
In our implementation, we use the original \gls{INR} architecture \citep{sitzmann2020}.
For the specific denoising task we consider in our model, we extend the input of the network architecture to include the corrupted version of the input sample and the diffusion time $t$, in addition to the spatial coordinates. We emphasize that our architectural design is simple, and does not require self-attention mechanisms \citep{song2020}. The non-linearity we use in our network is a Gabor wavelet activation function \citep{saragadam2023wire}. Furthermore, we found beneficial the inclusion of skip connections.

As stated in the main paper, we consider the \textit{modulation} approach to \glspl{INR}. In particular, we implement the meta-learning scheme described by \citet{dupont2022b, finn2017model}. The outer loop is dedicated to learning the base parameters of the model, while the inner loop focuses on refining the base parameters for each input sample. In the outer loop, the optimization algorithm is AdaBelief \citep{zhuang2020adabelief}, sweeping the learning rate over {1e-4, 1e-5, 1e-6}. We found the use of a cosine warm-up schedule to be beneficial for avoiding training instabilities and convergence to sub-optimal solutions. The inner loop is implemented by using three steps of stochastic gradient descent (SGD).

\noindent \textbf{Transformer-based score network.}
%{\color{red}
In our experiments with the Transformer architecture for score modeling, we employed the UViT backbone \cite{bao2022all}. This backbone processes all inputs, be they temporal or noisy image patches, as tokens. Rather than utilizing UViT's default learned positional embeddings, we adapted it to integrate 2D sinusoidal positional encodings. For the noisy input images, patch embeddings transform them into a sequence of tokens. Notably, we chose a patch size of 1 to fully harness the functional properties of our framework. Time embeddings are computed based on the time and then concatenated with the image tokens.

Our chosen transformer architecture comprises 7 layers, with each layer composed of a self-attention mechanism with 8 attention heads and a feedforward layer. Furthermore, we use long skip connections between the shallower and deeper layers, as outlined by  \cite{bao2022all}.

For optimization during our training, we utilized the AdamW\cite{loshchilov2017decoupled} algorithm with a weight decay of 0.03. We employed a cosine warm-up schedule for the learning rate, which ends at a value of 2e-4.
%}

\subsection{Additional results}

\subsubsection{A Toy example.} 
We here present some qualitative examples on a synthetic data-set of functions $\in L([-1,1])$, and therefore consider the settings described in \Cref{sec:impl_example}. The \textit{Quadratic} data is generated as in \citep{phillips2022spectral}, i.e. $X_0[p]=qp^2+\epsilon$, where $\epsilon\sim\mathcal{N}(0,0.1)$ and $q$ is a binary random variable that take values $\{-1,1\}$ with equal probability. Concerning the design of the forward \gls{SDE}, we select $b^k=\min(\sqrt{k},10)$ and $r^k=k^{-2}$ (thus satisfying \Cref{rev_corollary}). The real data is generated considering a grid of 100 equally spaced points. We can see in \Cref{1d} some qualitative results. On the left real (red) and generated through \gls{FDP} (blue) samples show good agreement. Center and right plots depict some example of diffused samples for times 0.2 and 1.0 respectively. %present and  

\begin{figure}
    \centering
    \includegraphics[scale=.2]{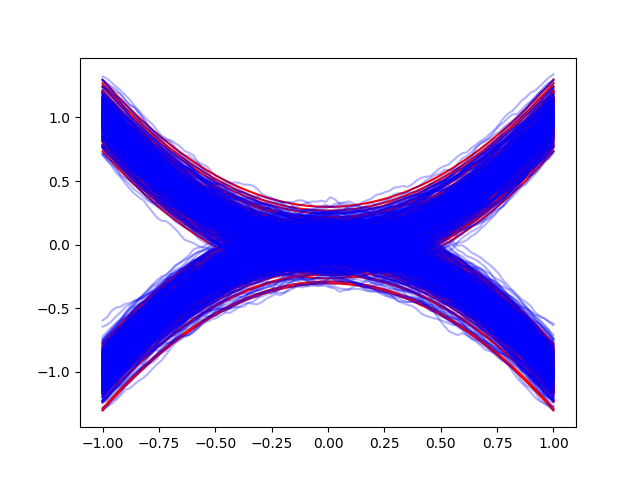}
    \includegraphics[scale=.2]{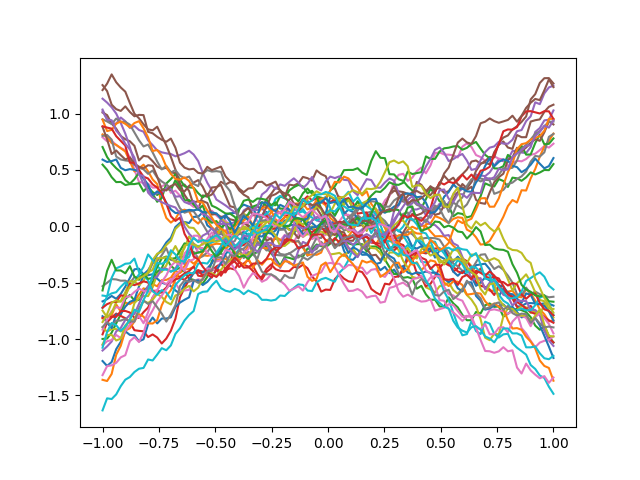}
    \includegraphics[scale=.2]{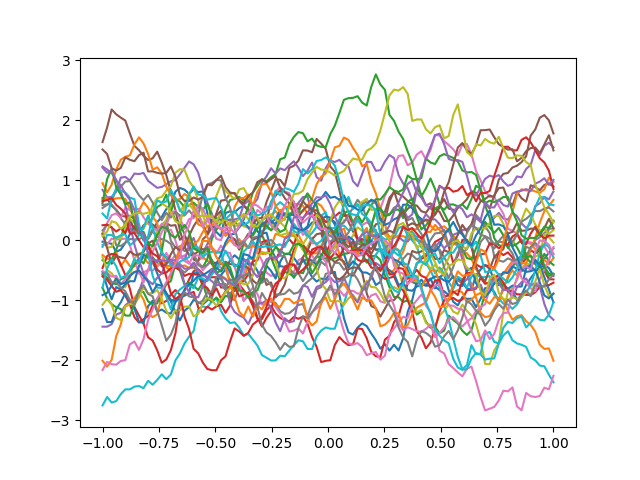}
    \caption{Left: real (red) and generated samples (blue). Center and Right: Samples diffused for times 0.2 and 1.0 respectively.}
    \label{1d}
\end{figure}

\subsubsection{\mnist data-set}
We evaluate our approach on a simple data-set, using \textsc{MNIST} $32 \times 32$ \citep{lecun2010}. In this experiment, we compare our method against the baseline score-based diffusion models from \citet{song2021a}, which we take from the official code repository \url{https://github.com/yang-song/score_sde}. The baseline implements the score network using a \textsc{U-Net} with self-attention and skip connections, as indicated by current best practices, which amounts to $O(10^8)$ parameters.

Instead, our method uses a score-network/\gls{INR} implemented as a simple \gls{MLP} with 8 layers and 128 neurons in each layer. The activation function is a sinusoidal non-linearity \citep{sitzmann2020}. Our model counts $O(10^5)$ parameters.
We consider an \gls{SDE} with parameters $r^{k,m}=\frac{176}{k^2+m^2+2}$, \footnote{Strictly speaking, the sum of the series $r^{k,m}$ is not convergent. We experimented changing the decay to ensure convergence, but we observed no numerical difference with the settings we the setting we used. It is an interesting avenue for future work to study if this approximation has an impact for higher-resolution data-sets.} and $b^{k,m}= \min((k^2+m^2+0.3)^{-1}+\left(\frac{r^{k,m}}{33}\right)^{\frac{1}{4}},3.6)$. These values have been determined empirically by observing the power spectral density of the data-set, to ensure a well-behaved Signal to Noise ratio evolution throughout the diffusion process for all frequency components.

\begin{minipage}{.49\textwidth}
	\includegraphics[width=\textwidth]{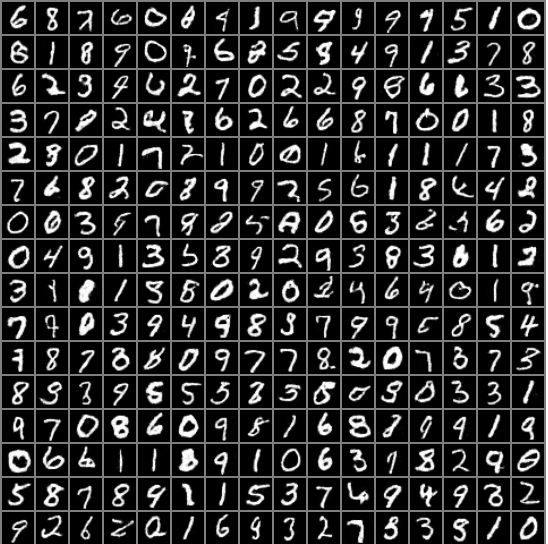}%{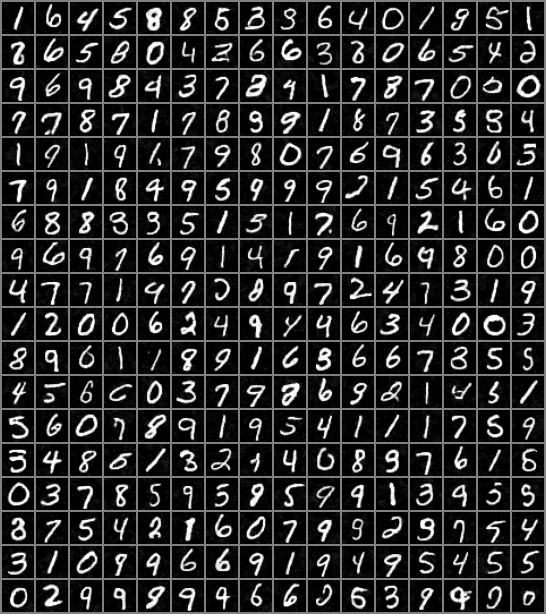}
	\captionof{figure}{\textsc{MNIST} samples generated according to our proposed \glspl{FDP}.}
	\label{fig:mnist}
\end{minipage}
\hfill
\begin{minipage}{.49\textwidth}
    \includegraphics[scale=.3]{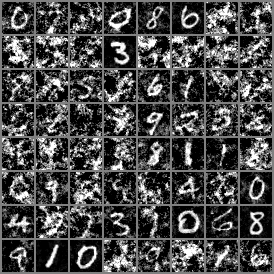}
    \includegraphics[scale=.3]{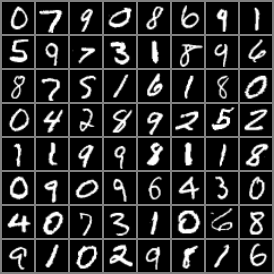}
    \includegraphics[scale=.3]{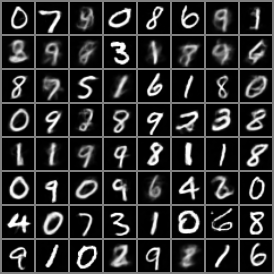}
    \captionof{figure}{Top right: \textsc{MNIST} real samples. Top Left: Each sample is diffused for a given random time. Bottom: output of \gls{INR} for corresponding input noisy image.}
    \label{fig:INR_noisy}
\end{minipage}
\vspace{10pt}

In \Cref{fig:mnist} we report un-curated samples generated according to our \gls{FDP}. In \Cref{fig:INR_noisy} we present instead various ``intermediate'' noisy versions of the training data, to illustrate the kind of noise we use to train the score network, and the output of the denoising \gls{INR}. 
We also report the \gls{FID} score computed using 16k samples (lower is better). For the baseline we obtain \gls{FID}=$0.05$, whereas for the proposed method we obtain \gls{FID}=$0.43$. Although the \gls{FID} score is in favor of the baseline, we believe that our results -- obtained with a simple \gls{MLP} -- are very promising, as further corroborated by experiments on a more complex dataset, which we show next.
%{\color{red}
\paragraph{Super resolution.}
We demonstrate how an INR trained at a 32x32 resolution on MNIST can be seamlessly applied to increase the resolution of the generated data points. Given that the INR establishes a mapping between a grid and its corresponding value, we initiated the diffusion process from a grid at a 32x32 resolution. The diffusion process continued until the final step, where we used the last learned parameters to extrapolate outputs at a higher resolution. A significant advantage of using INRs is the ability to conduct the resource-intensive sampling at a lower resolution and then effortlessly transition between resolutions with just a single forward call to the model. \Cref{fig:super resolution},  shows our results at different resolutions.
%}
\begin{figure}
    \centering
    \includegraphics[scale=0.06]{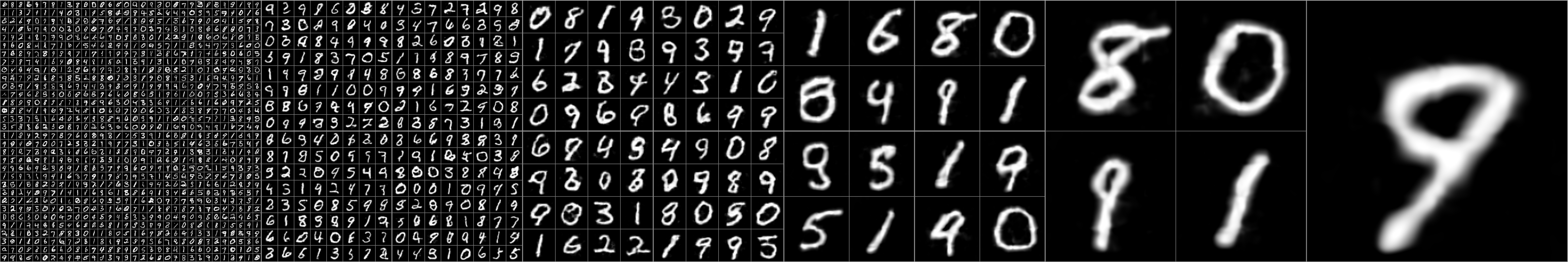}
    \caption{Example of super-resolution of Mnist images. From left to right:
initial (training) resolution to higher resolutions.}
    \label{fig:super resolution}
\end{figure}

\subsubsection{\celeba data-set}

For the \celeba data-set we considered the same \gls{SDE} as for the \mnist experiment. 
Results reported in the main paper have been obtained using a numerical integration scheme of a variant of the predictor-corrector scheme of \citep{song2020}, which we adapted to the \glspl{SDE} we consider in our work. In \Cref{fig:uncurated celeba} and \Cref{fig:uncurated celeba transformer} we report additional un-curated samples obtained with the INR and Transformer respectively. We proceed to describe further experiments in the following section.

\paragraph{Conditional generation.} In the following, we consider three use-cases for conditional generation: in-painting, de-blurring, and colorization, which we describe next. All these additional experiments were completed using the same architecture and configuration of the unconditional generation described above.

\noindent \textbf{In-painting.} We perform in-painting experiments by adopting the same approach described by \citet{song2020}, and report results in \Cref{fig:inpainting}. Original images (left-column of \Cref{fig:inpainting}) are masked (center-column of \Cref{fig:inpainting}), where we set the value corresponding to the missing pixels to 0. The right column of \Cref{fig:inpainting} shows the results of the in-painting scheme where, qualitatively, it is possible to observe that the conditional generation is able to fill the missing portion of the images while maintaining good semantic coherence.

\noindent \textbf{De-blurring.} Our \glspl{FDP} are naturally suited for the de-blurring use-case, as shown in \Cref{fig:deblurring}. In this experiment, we take the original images (left column of \Cref{fig:deblurring}) and filter them with a low pass filter (center column of \Cref{fig:deblurring}). The de-blurring scheme is implemented as the in-painting approach described by \citet{song2020}, where the only difference is that the masking at each update is applied in the frequency domain. The right column of \Cref{fig:deblurring} shows that our technique gracefully recovers missing details and is capable of producing high quality images conditioned on the distorted inputs.

\noindent \textbf{Colorization.} In this use-case, we adapt the approach from \citep{song2020} to our setting. \Cref{fig:colorization} depicts qualitative results of the colorization experiment, confirming the flexibility of the proposed scheme.

\subsubsection{SPOKEN DIGIT data-set}

To demonstrate the versatility of our framework, we conducted preliminary experiments on an audio dataset, specifically the Spoken Digit Dataset. This dataset comprises recordings of spoken digits, stored as wav files at an 8kHz sampling rate, with each recording trimmed to minimize silence at the beginning and end. The dataset features five speakers who have contributed to a total of 2,500 recordings, providing 50 recordings of each digit per speaker. The dataset is publicly available on the TensorFlow Dataset Catalog. As preprocessing, each sample was either padded with zeros or truncated to a maximum duration of one second. Subsequently, the data was normalized using the effects.normalize function from the pydub library, and each sample was scaled to a range of [-1, +1] by dividing it by the dataset's maximum intensity. For the audio experiments, we fed the raw audio data directly into the transformer model, without converting it to log mel spectrograms, which is a common practice in audio processing tasks. The transformer model was configured with a patch size of 2, an embedding size of 512, 13 layers, and 8 heads.
We employed the AdamW optimizer with a weight decay of 0.03 and a cosine warmup schedule that decays at a value of 1e-5. The preliminary examination of the audio generated by our model reveals its ability to effectively generate spoken digits. Upon listening to the model's generated samples, we were able to recognize the digits accurately, showcasing the model's potential in audio generation tasks. \Cref{fig:waveforms} provides a comparative analysis of the waveforms generated by our model against real examples from the dataset.

\begin{figure}[ht]
    \centering
    \begin{subfigure}{0.45\textwidth}
        \centering
        \includegraphics[width=\textwidth]{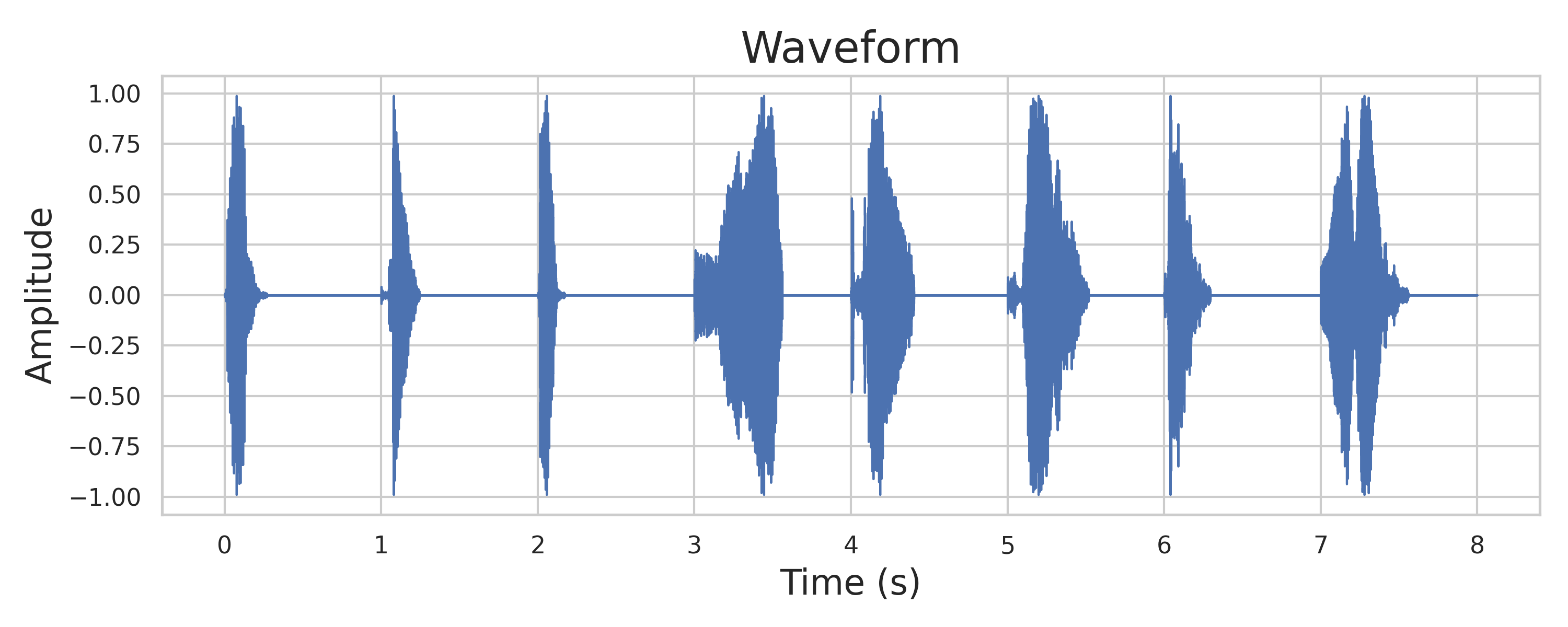}
        \caption{Real samples from dataset}
    \end{subfigure}
    \hfill
    \begin{subfigure}{0.45\textwidth}
        \centering
        \includegraphics[width=\textwidth]{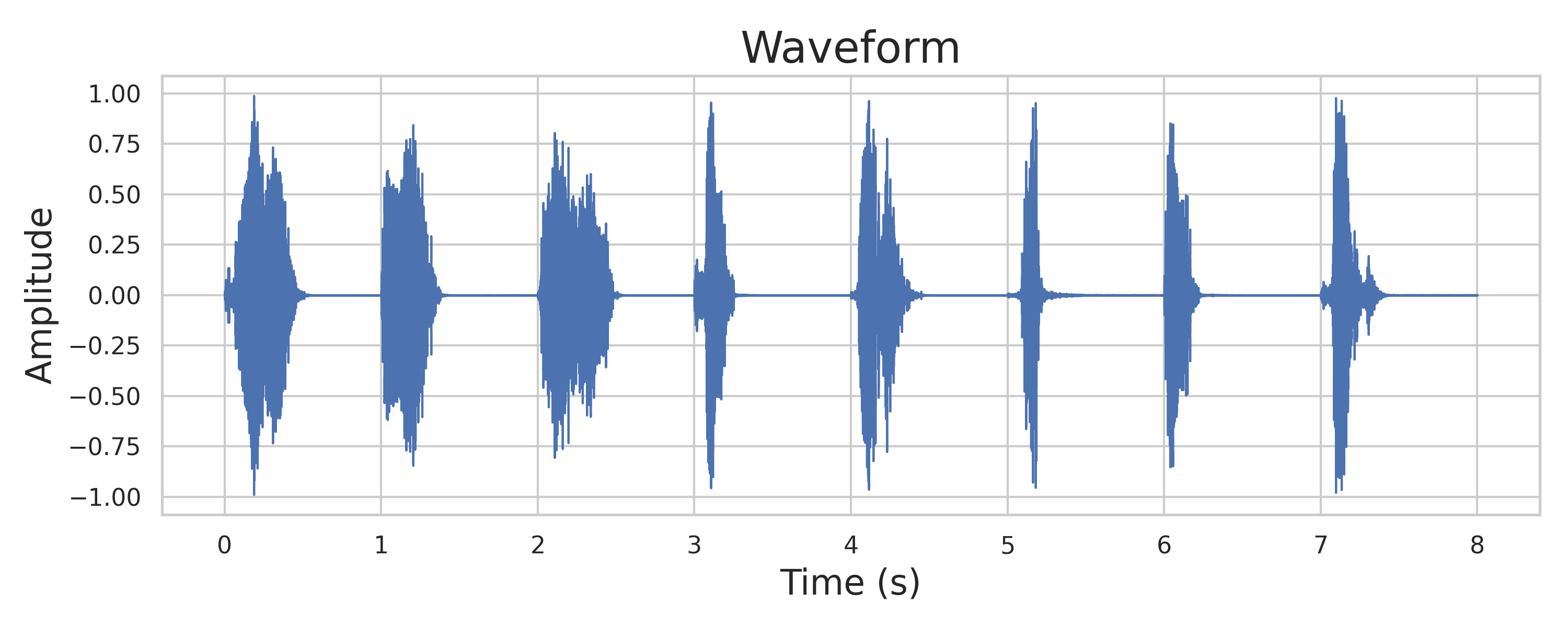}
        \caption{Generated samples}
    \end{subfigure}
    \caption{Comparison of real and generated waveforms.}
    \label{fig:waveforms}
\end{figure}

\begin{figure}
    \centering
    \includegraphics[scale=0.4]{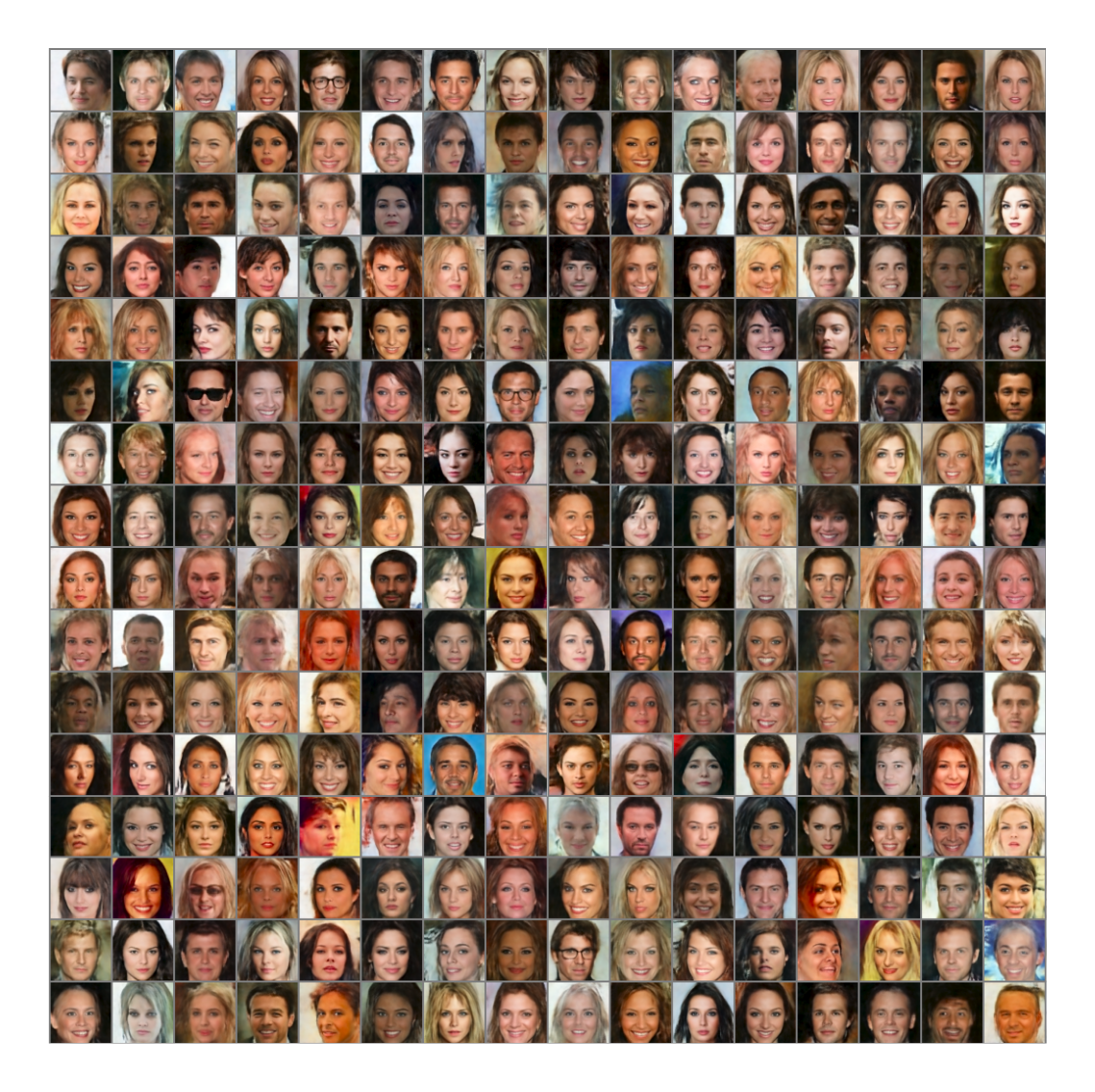}
    \caption{Uncurated \celeba samples generated by the INR.}
    \label{fig:uncurated celeba}
\end{figure}

\begin{figure}
    \centering
    \includegraphics[scale=0.4]{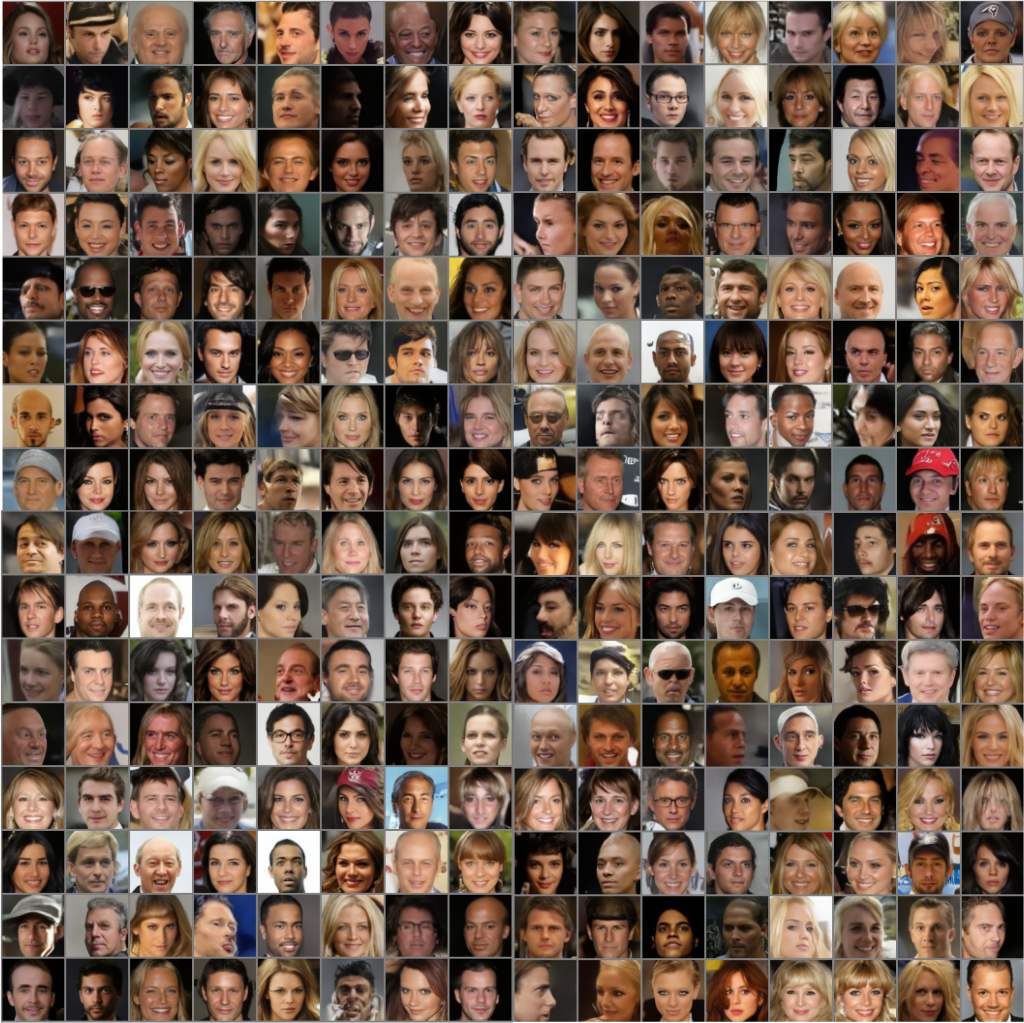}
    \caption{Uncurated \celeba samples generated by the Transformer.}
    \label{fig:uncurated celeba transformer}
\end{figure}

\begin{figure}
    \centering
    \includegraphics[scale=0.4]{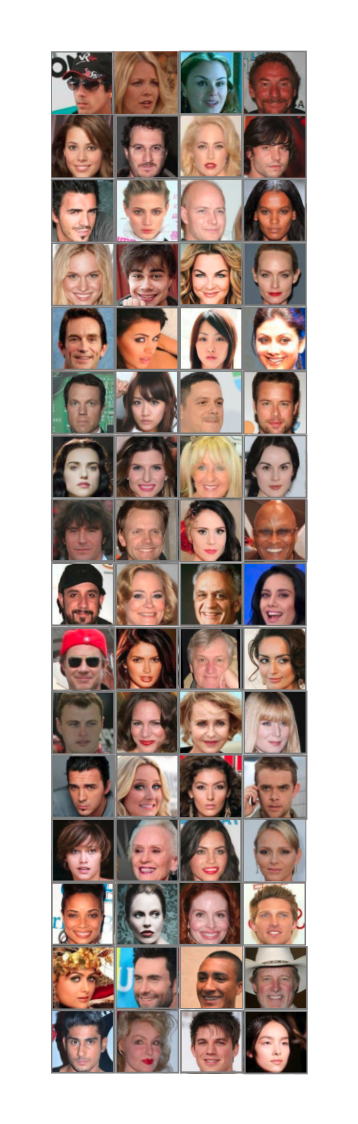}
    \includegraphics[scale=0.4]{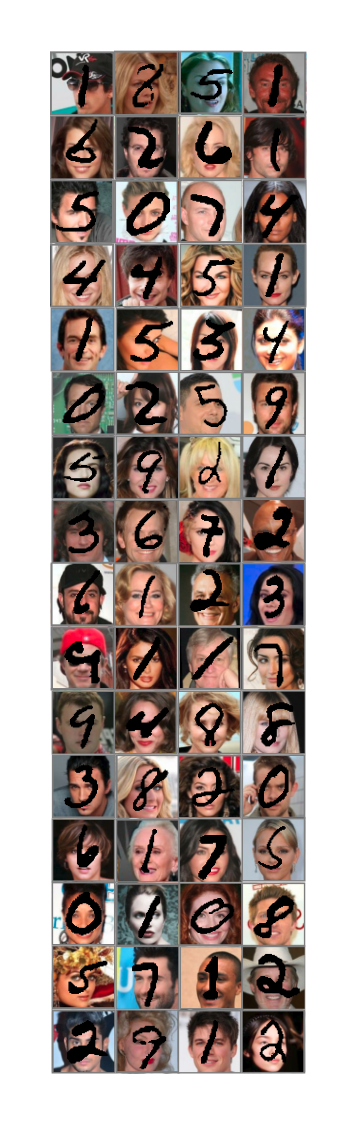}
    \includegraphics[scale=0.4]{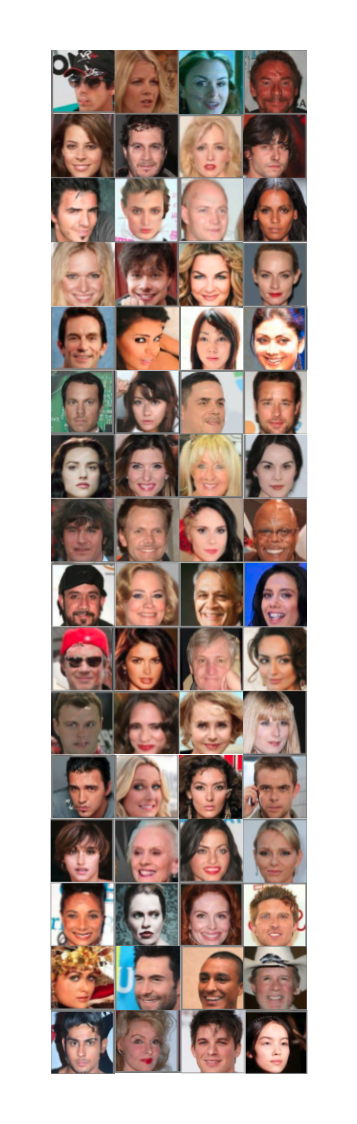}
    \caption{In-painting experiment using INR. Left: real samples, Center: Masked samples, Right: Reconstructed samples.}
    \label{fig:inpainting}
\end{figure}

\begin{figure}
    \centering
    \includegraphics[scale=0.4]{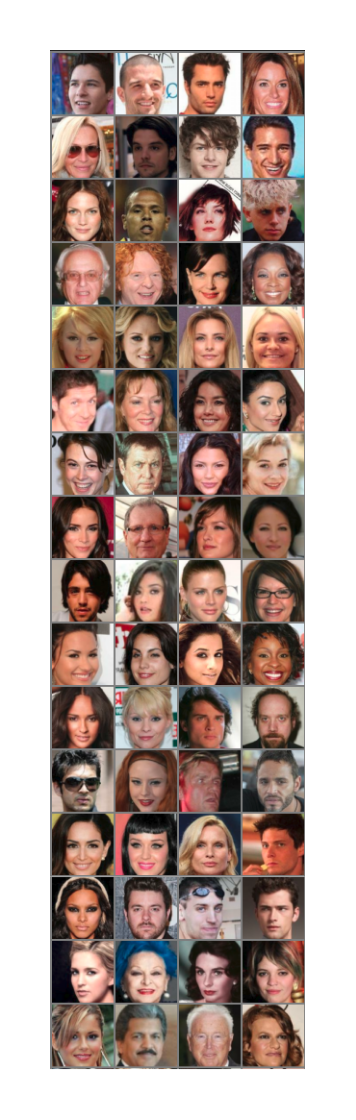}
    \includegraphics[scale=0.4]{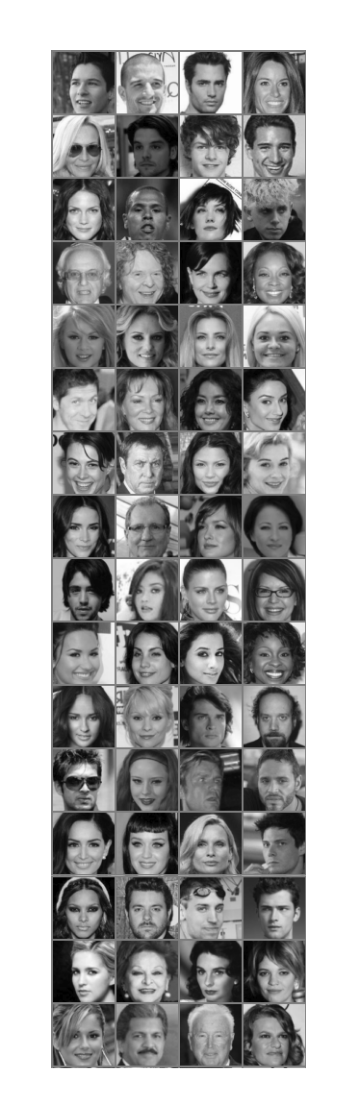}
    \includegraphics[scale=0.4]{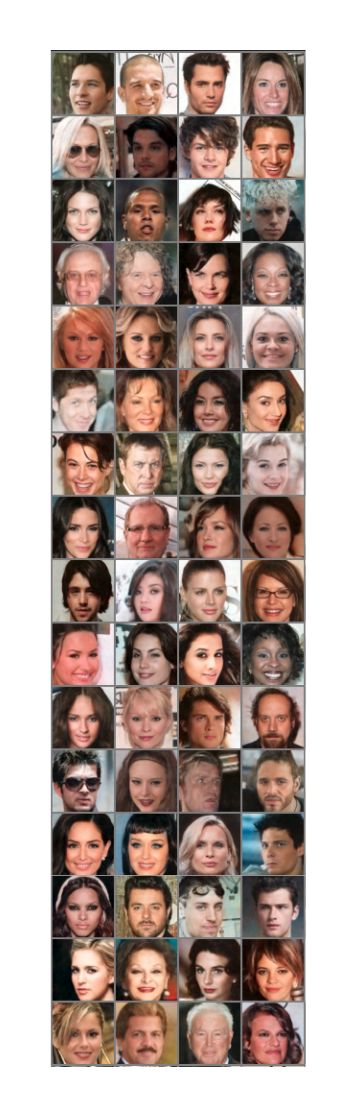}
    \caption{Colorization experiment using INR. Left: real samples, Center: Gray-scale samples, Right: Reconstructed samples.}
    \label{fig:colorization}
\end{figure}

\begin{figure}
    \centering
    \includegraphics[scale=0.4]{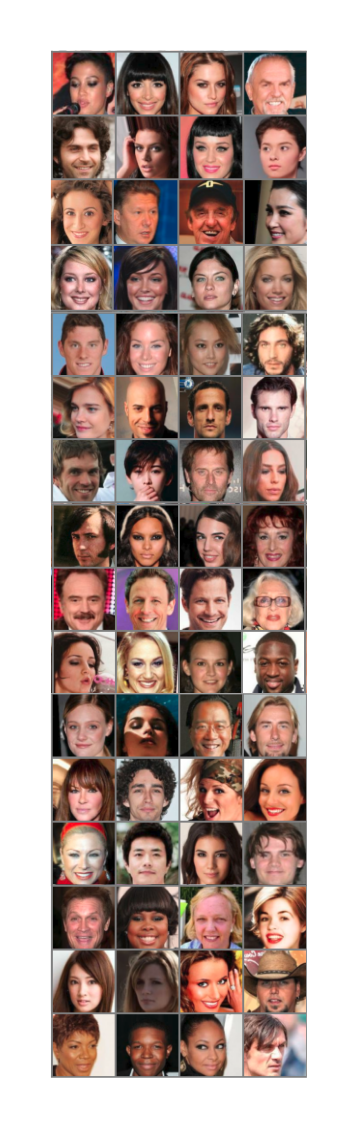}
    \includegraphics[scale=0.4]{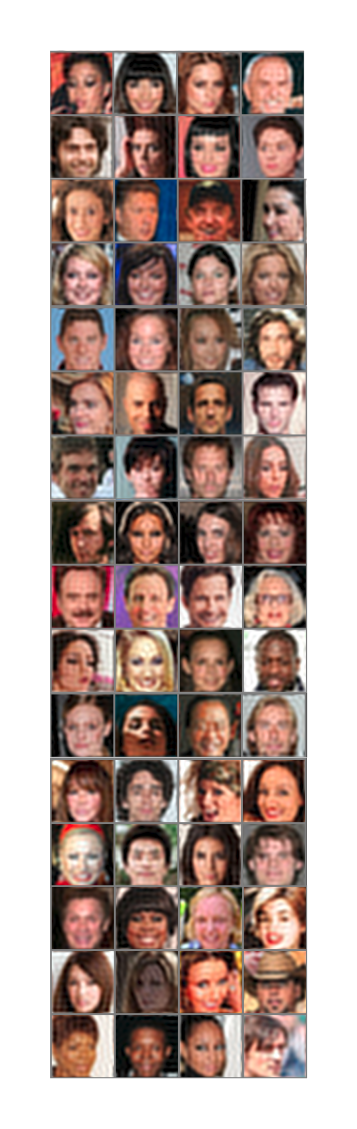}
    \includegraphics[scale=0.4]{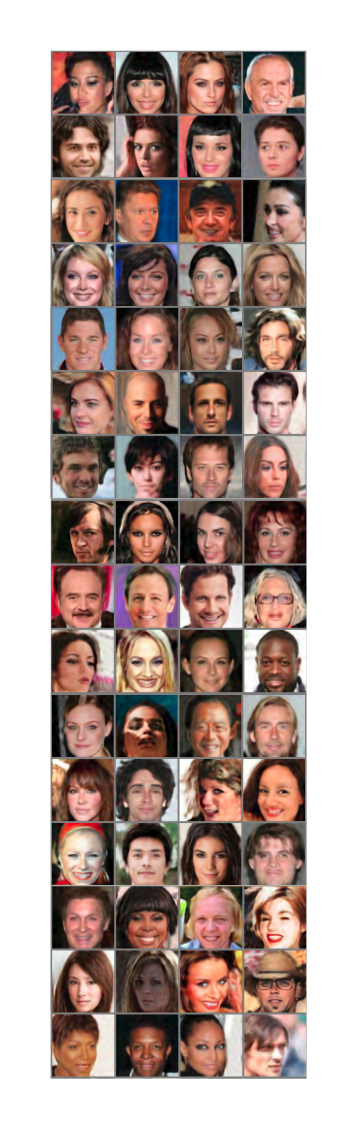}
    \caption{De-blurring experiment using INR. Left: real samples, Center: blurred samples, Right: Reconstructed samples.}
    \label{fig:deblurring}
\end{figure}

\end{document}